\documentclass[sigconf]{acmart}

\usepackage{mathtools}
\usepackage{graphicx}
\usepackage{booktabs}
\usepackage[ruled,vlined]{algorithm2e}
\usepackage{cleveref}
\usepackage{xcolor}
\usepackage{subcaption}
\usepackage{multirow}
\usepackage{bm}
\usepackage{colortbl}
\usepackage{tikz}

\newcommand{\Loss}{\mathcal{L}}

\newcommand{\Z}{\mathcal{Z}}
\newcommand{\R}{\mathbb{R}}
\newcommand{\N}{\mathcal{N}}
\newcommand{\E}{\mathbb{E}}

\DeclareMathOperator*{\argmin}{arg\,min}
\definecolor{Gray}{rgb}{0.88, 0.88, 0.88}

\AtBeginDocument{%
  }

\copyrightyear{2026}
\acmYear{2026}
\setcopyright{cc}
\setcctype{by}
\acmConference[MM '26]{Proceedings of the 34th ACM International Conference on Multimedia}{November 10--14, 2026}{Rio de Janeiro, Brazil}
\acmBooktitle{Proceedings of the 34th ACM International Conference on Multimedia (MM '26), November 10--14, 2026, Rio de Janeiro, Brazil}
\acmDOI{10.1145/3767308.3835243}
\acmISBN{979-8-4007-2213-4/2026/11}

\begin{document}

\title{Hybrid-Domain Posterior Sampling for Inverse Problems via Latent Flow Matching}


\author{Hongjie Wu}  \orcid{0009-0007-3203-1521}
 \email{wuhongjie0818@gmail.com}
\authornotemark[1] 
\affiliation{
 \institution{College of Computer Science,\\ Sichuan University} 
 \city{Chengdu} \country{China}
 }

\author{Yiping Xie}  \orcid{0009-0004-8913-129X}
 \email{2020141460037@stu.scu.edu.cn}  
 \authornote{Equal contribution.}
\affiliation{
 \institution{College of Computer Science,\\ Sichuan University} 
 \city{Chengdu} \country{China}
 } 

\author{Jiancheng Lv} 
\authornote{Corresponding author.}
\orcid{0000-0001-6551-3884}
 \email{lvjiancheng@scu.edu.cn}
\affiliation{
\institution{College of Computer Science,\\ Sichuan University}  
 \city{Chengdu}
 \country{China}
 }
 

\begin{abstract}

Latent Flow Models have revolutionized compressed-space image synthesis, yet their application to high-fidelity inverse problems remains bottlenecked. 
In this paper, we trace this dilemma to a fundamental geometric limitation of pre-trained autoencoders, which we term \emph{First-Order Manifold Blindness}. Severe decoder compression (e.g., retaining only $\sim\!2\%$ of the original degrees of freedom) produces a rank-deficient Jacobian, rendering high-frequency measurement residuals in its orthogonal complement invisible to latent gradients even when the decoder can represent the target image.
To overcome this bottleneck, we propose Hybrid-Domain Posterior Sampling (HDPS), a decoupled inference framework that disentangles physical measurement consistency from semantic prior modeling. 
HDPS diverges into the pixel space, leveraging Langevin dynamics to absorb precise orthogonal measurement gradients, and subsequently projects these structural corrections back onto the generative manifold.
An optimization-based latent alignment is introduced to filter pixel-space artifacts while avoiding the semantic drift of direct encoding.
Extensive experiments on diverse inverse problems demonstrate that HDPS establishes a new state-of-the-art, successfully recovering the high-frequency structural precision that latent-only solvers inherently discard.
The code is available at \href{https://github.com/74587887/HDPS}{https://github.com/74587887/HDPS}.

\end{abstract}

\begin{CCSXML}
<ccs2012>
   <concept>
       <concept_id>10010147.10010178.10010224</concept_id>
       <concept_desc>Computing methodologies~Computer vision</concept_desc>
       <concept_significance>500</concept_significance>
       </concept>
   <concept>
       <concept_id>10010147.10010178.10010224.10010245</concept_id>
       <concept_desc>Computing methodologies~Computer vision problems</concept_desc>
       <concept_significance>300</concept_significance>
       </concept>
 </ccs2012>
\end{CCSXML}

\ccsdesc[500]{Computing methodologies~Computer vision}
\ccsdesc[300]{Computing methodologies~Computer vision problems}

\keywords{Inverse problems, Latent flow models, Hybrid-domain inference, Posterior sampling}



\maketitle

\section{Introduction}
\label{sec:introduction}

Recent advances in generative modeling have been driven by the shift from pixel space~\cite{ho2020denoising,song2021scorebased,nichol2021improved} to latent-space~\cite{vahdat2021score,rombach2022high,zhang2023adding}. Notable architectures such as Stable Diffusion~3~\cite{esser2024scaling}, 3.5~\cite{sd35} and FLUX~\cite{labs2025flux} employ \emph{Flow Matching} (FM)~\cite{lipman2023flow,liu2023flow,martin2025pnpflow} on compressed latent representations, achieving state-of-the-art synthesis with significantly reduced computational cost~\cite{rombach2022high,he2023iterative} and straighter generation trajectories~\cite{pourya2025flower}.
Following this success, there has been a surge of interest in leveraging these pre-trained Latent Flow Models (LFMs) for inverse problems~\cite{tarantola2005inverse,chen2021equivariant}, which aim to recover a clean image $\bm{x}$ from noisy or corrupted measurements $\bm{y}$, demonstrating superior performance across diverse benchmarks and setting new state-of-the-art results~\cite{kim2025flowdps,erbach2025FLAIR,park2025flowlps}, particularly in high-resolution image restoration.


While the latent flow model naturally enforces a generative prior, the prevailing approach to incorporating data consistency is to optimize a latent code $\bm{z}$ to minimize a measurement loss $\Loss(\bm{z}) =\|\mathcal{A}(D(\bm{z})) - \bm{y}\|^2$, where $D$ is the pre-trained decoder.
This strategy implicitly assumes that physical measurement constraints can be effectively back-propagated through $D$.
However, because the latent dimension $d$ is drastically smaller than the image dimension $n$ (often retaining only $\sim\!2\%$ of the original degrees of freedom), the decoder Jacobian $J_D \in \mathbb{R}^{n \times d}$ is severely rank-deficient.
As a result, any component of the measurement gradient lying in the orthogonal complement of the Jacobian's column space---typically high-frequency residuals critical for restoration---is mathematically \emph{invisible} to latent updates.

We formalize this as \emph{First-Order Manifold Blindness} (Sec.~\ref{sec:motivation}): the optimizer stagnates even when the decoder theoretically has the capacity to represent the target image, because first-order gradient updates cannot access these required repair directions.
Empirical analysis (shown in Fig.~\ref{fig:problems}) confirms that latent gradients suffer from early optimization stagnation, forcing outcomes that are either unnaturally over-smoothed or geometrically misaligned, thus degrading reconstruction quality.


These observations reveal a fundamental domain mismatch: 
\textbf{\emph{
the prior naturally resides in the latent space, but the measurement likelihood is defined in the pixel space}}.
Forcing pixel-level physics through a compressed bottleneck fundamentally compromises solver fidelity.
This motivates our core design principle: \emph{use each space for what it does best}---the pixel space for measurement consistency, and the latent space for prior modeling.

To resolve this, we propose \textbf{Hybrid-Domain Posterior Sampling (HDPS)}, a decoupled inference framework systematically designed to bypass the manifold blindness bottleneck. HDPS rejects the single-domain optimization paradigm, strategically alternating between two explicitly separated roles. First, we transition into the uncompressed pixel space, utilizing Langevin dynamics to absorb the more precise measurement correction. This mathematically liberates the update step, allowing the recovery of high-frequency structural details strictly orthogonal to the decoder manifold. Second, we identify that directly encoding the corrected image (i.e., $\bm{z} = E(\bm{x})$) introduces uncontrollable semantic drift, as the pre-trained encoder maps non-Gaussian artifacts from pixel-space adjustments into corrupted latent features. To address this, we perform the projection via \emph{optimization-based latent alignment}, which maps the corrections back onto the generative prior while filtering out pixel-space artifacts without semantic drift. We further show theoretically that HDPS resolves manifold blindness through a second-order mechanism that implicitly leverages the decoder's non-linear curvature (Theorem~\ref{thm:resolution}).



Our contributions are summarized as follows:
\begin{itemize}
  \item  We formally identify and mathematically prove the intrinsic limitations of composite back-propagation ($\mathcal{A} \circ D$) in latent-space inverse solvers. We demonstrate that latent updates are fundamentally blind to high-frequency residuals orthogonal to the decoder Jacobian.
  \item  We propose a novel algorithm HDPS that circumvents the geometric bottleneck by strictly decoupling physical measurement consistency (executed via pixel-space Langevin dynamics) from semantic prior evolution (governed by latent flow matching).
  \item We demonstrate that latent projection via decoder inversion optimization is vastly superior to direct encoding, acting as a structural filter that discards off-manifold pixel artifacts without suffering semantic drift.
  \item  Extensive evaluation across five diverse and challenging inverse problems demonstrates that HDPS significantly outperforms existing latent-only and baseline decoupled solvers. 
\end{itemize}

\section{Related Work}
\label{sec:related}

\paragraph{Pixel-space algorithms for inverse problems.}
The first wave of generative inverse solvers operated entirely in pixel space~\cite{chung2022improving,chung2024decomposed,wang2023zeroshot,wu2024diffusion}.
Diffusion Posterior Sampling (DPS)~\cite{chung2023diffusion} approximates the posterior score by combining the unconditional score with a likelihood gradient.
DDRM~\cite{kawar2022denoising} exploits the SVD of the forward operator for closed-form conditional updates.
$\Pi$GDM~\cite{song2023pseudoinverseguided} improves the likelihood gradient via pseudoinverse projections, while Plug-and-Play methods~\cite{zhang2025decoupling,zhu2023denoising} integrate physical operators with learned denoisers via ADMM.
DAPS~\cite{zhang2025improving} introduces an annealing strategy that decouples the prior score from the likelihood gradient, performing Langevin sampling at each noise level.
These methods benefit from a well-defined likelihood in pixel space~\cite{wu2024principled,wu2025enhancing}, but pixel-space diffusion models are computationally expensive at high resolution.
Moreover, the community has increasingly shifted toward latent architectures~\cite{chung2024prompttuning,zhang2024flow} with stronger semantic priors, leaving pixel-space models at a representational disadvantage for complex, high-resolution imagery.

\paragraph{Latent-space methods for inverse problems.}
Recent works adapt posterior sampling to latent diffusion~\cite{askari2025latent,rout2024beyond,zilberstein2025repulsive} and flow models~\cite{ben2024d,yan2025fig}.
PSLD~\cite{rout2023solving} optimizes intermediate latent variables for data consistency during reverse diffusion.
ReSample~\cite{song2024resample} enforces hard data consistency via optimization, followed by stochastic resampling onto the noisy manifold.
In the flow matching paradigm~\cite{liu2023flow,lipman2023flow}, FlowChef~\cite{patel2025flowchef} exploits straight trajectories of rectified flows for gradient-free steering via gradient skipping.
FlowDPS~\cite{kim2025flowdps} derives a flow-version of Tweedie's formula to integrate likelihood gradients into the flow ODE.
FLAIR~\cite{erbach2025FLAIR} introduces a training-free variational framework with deterministic trajectory adjustments.
These methods inherit the efficiency and semantic strength of models like Stable Diffusion~3~\cite{esser2024scaling} and FLUX~\cite{labs2025flux}, but share a critical limitation: enforcing measurement consistency requires back-propagating gradients through the composite operator $\mathcal{A} \circ D$, which is ill-conditioned due to decoder nonlinearity.


\section{Preliminaries and Motivation}
\label{sec:preliminaries}

\subsection{Problem Setup and Latent Flow Matching}
\label{sec:setup}

We consider the recovery of an unknown signal $\bm{x} \in \R^n$ from noisy measurements governed by the forward model:
\begin{equation}
  \bm{y} = \mathcal{A}\bm{x} + \bm{\epsilon}, \quad \bm{\epsilon} \sim \N(\bm{0}, \sigma_y^2 \bm{I}),
  \label{eq:measurement}
\end{equation}
where $\mathcal{A}$ is a known forward degradation operator and $\sigma_y$ is the noise level.
From a Bayesian perspective, we characterize the posterior distribution $p(\bm{x} \mid \bm{y}) \propto p(\bm{y} \mid \bm{x})\, p(\bm{x})$, where the likelihood enforces data fidelity via $p(\bm{y} \mid \bm{x}) \propto \exp\!\left(-\frac{1}{2\sigma_y^2}\|\mathcal{A}\bm{x} - \bm{y}\|^2\right)$, and the prior $p(\bm{x})$ encodes natural image statistics.

To efficiently model the complex prior $p(\bm{x})$, we operate in a compressed latent space induced by a pre-trained VAE~\cite{kingma2013auto,rombach2022high} with encoder $E\colon \R^n \to \R^d$ and decoder $D\colon \R^d \to \R^n$ ($d \ll n$). We assume generation in the latent space $\Z = \R^d$ followed by decoding back to the pixel space via $D$.

A Flow Matching (FM) model~\cite{lipman2023flow} learns a time-dependent vector field $v_\theta(\bm{z}, t)$ that transports a simple Gaussian prior $p_1 = \N(\bm{0}, \bm{I})$ to the latent data distribution $p_0 = E_\# p_{\text{data}}$. This is achieved by defining an Ordinary Differential Equation (ODE):
\begin{equation}
  \frac{\mathrm{d}\bm{z}_t}{\mathrm{d}t} = v_\theta(\bm{z}_t, t), \quad \bm{z}_1 \sim p_1.
  \label{eq:ode}
\end{equation}
Integrating Eq.~\eqref{eq:ode} backward from $t=1$ to $t=0$ yields samples from the data distribution.
To train $v_\theta$, a linear conditional interpolation path is defined between $\bm{z}_0 \sim p_0$ and $\bm{z}_1 \sim p_1$:
\begin{equation}
  \bm{z}_t = (1 - t)\bm{z}_0 + t \bm{z}_1,
\end{equation}
which has a target velocity $\dot{\bm{z}}_t = \bm{z}_1 - \bm{z}_0$. The network is trained to regress this target by minimizing:
\begin{equation}
  \Loss_{\text{FM}}(\theta) = \E_{t, \bm{z}_0, \bm{z}_1}\! \left[\|v_\theta(\bm{z}_t, t) - (\bm{z}_1 - \bm{z}_0)\|^2\right].
\end{equation}
After training, integrating the learned ODE and decoding through $D$ provides an efficient generative process.

\begin{figure*}[t]
    \centering
    \begin{subfigure}[b]{0.48\linewidth}
        \centering
        \includegraphics[width=0.48\linewidth]{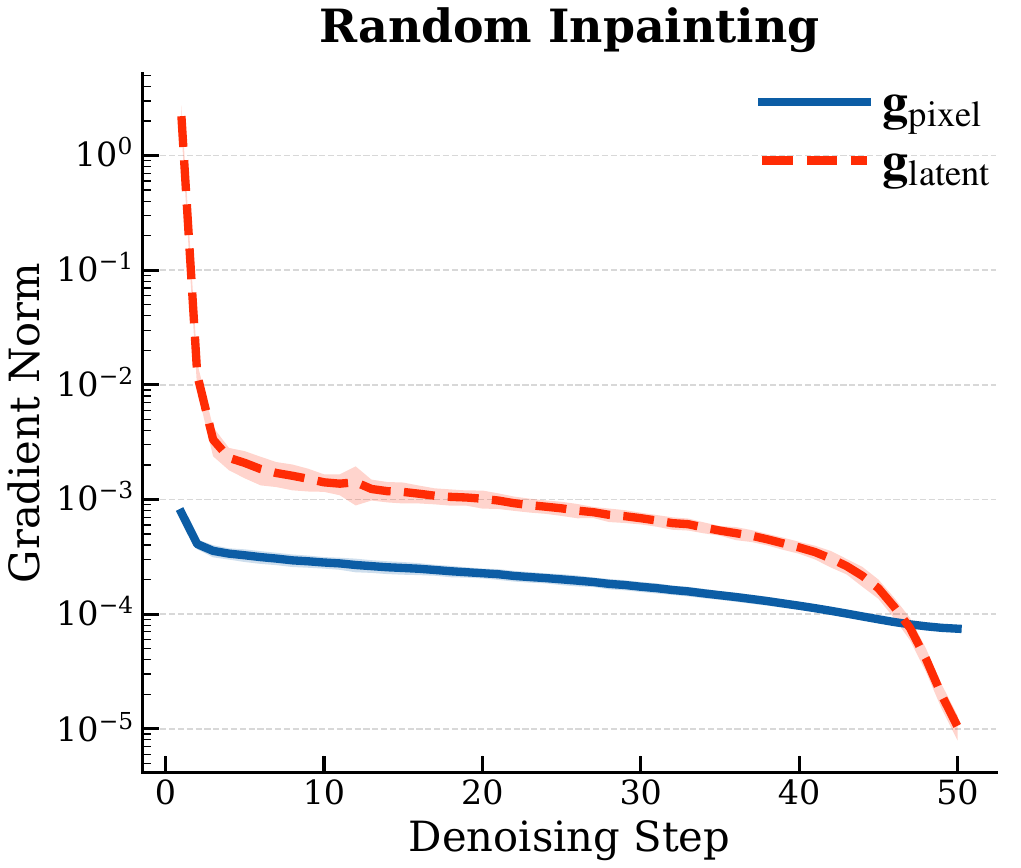}
        \hfill
        \includegraphics[width=0.48\linewidth]{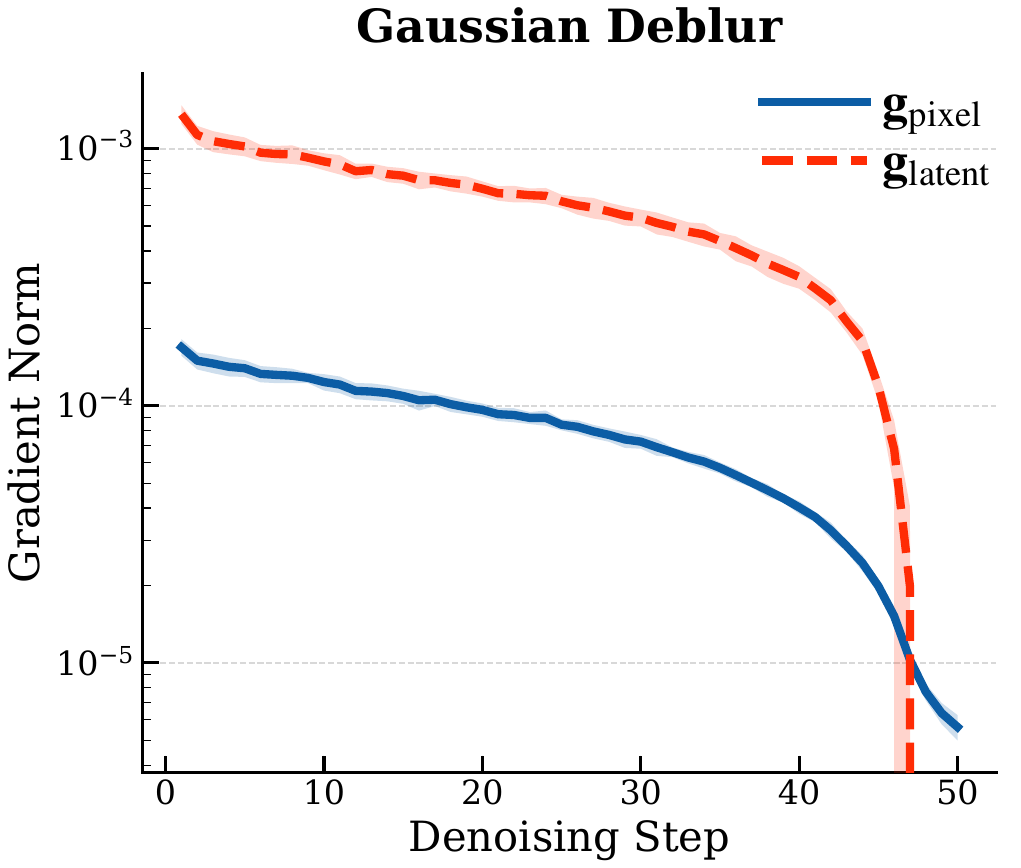}
        
        \vspace{3pt} 
        
        \includegraphics[width=0.48\linewidth]{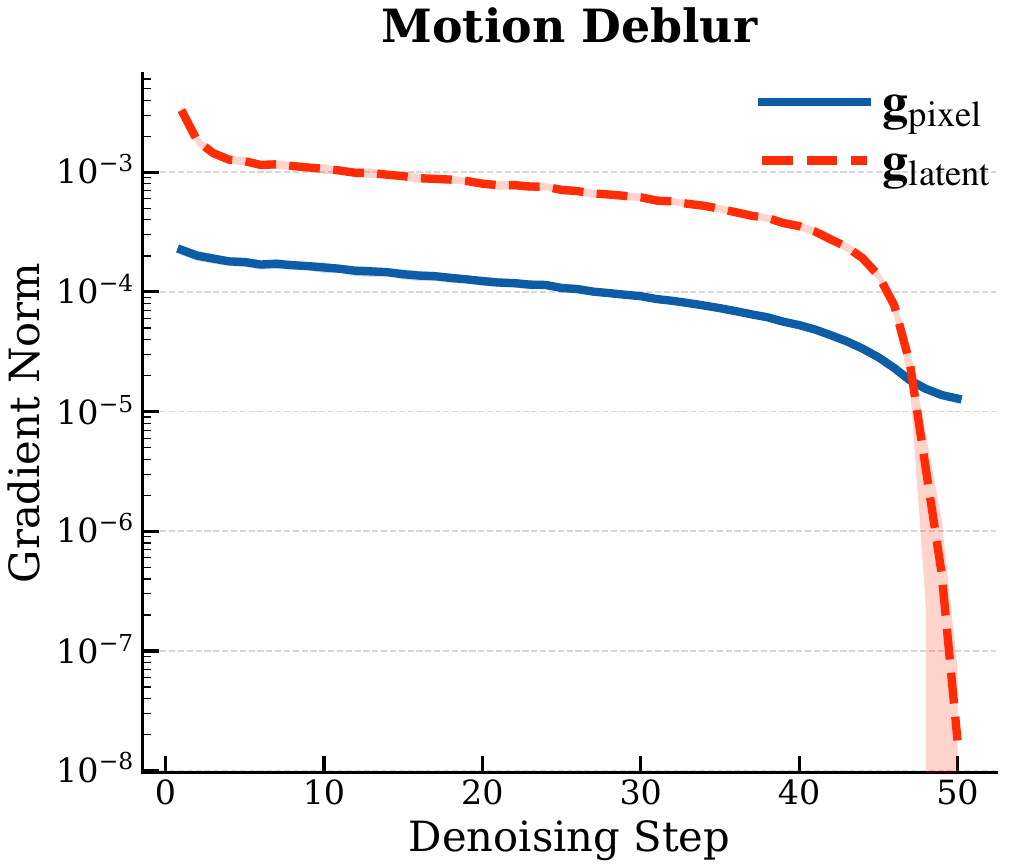}
        \hfill
        \includegraphics[width=0.48\linewidth]{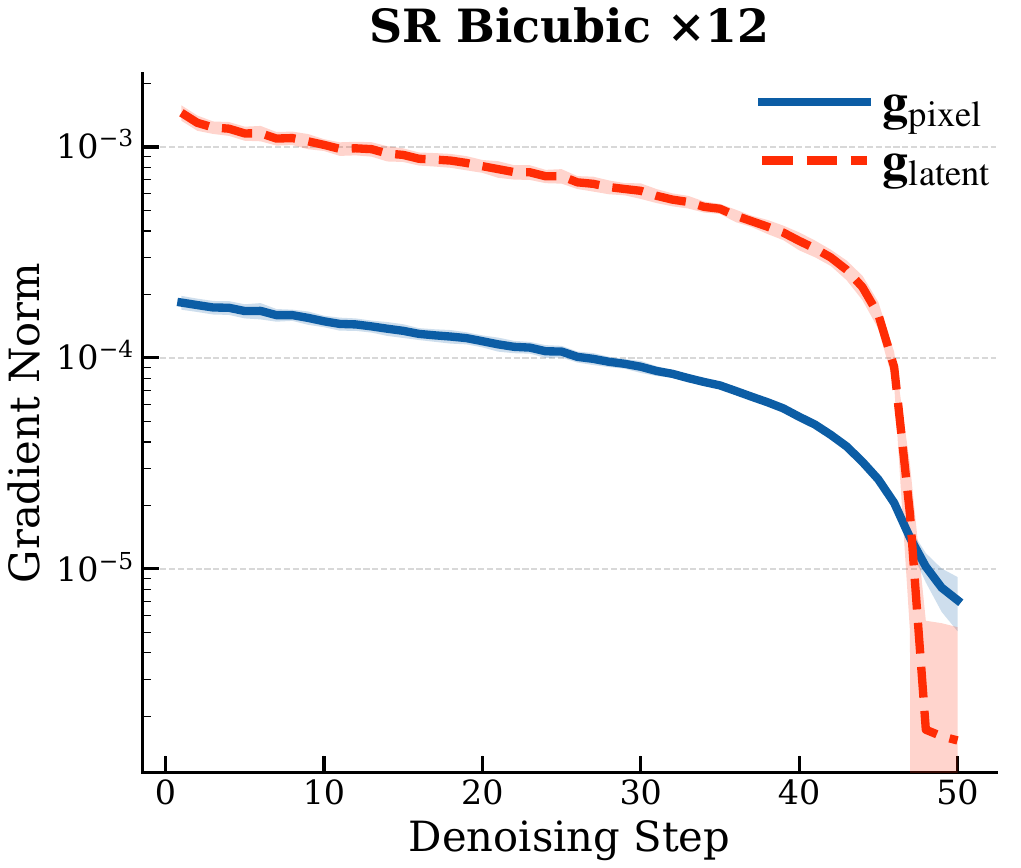}
        
        \vspace{-2pt} 
        \caption{Gradient norm evolution.}
        \label{fig:norms_all}
    \end{subfigure}
    \hspace{1pt} 
    \begin{subfigure}[b]{0.51\linewidth}
        \centering
        \includegraphics[width=\linewidth]{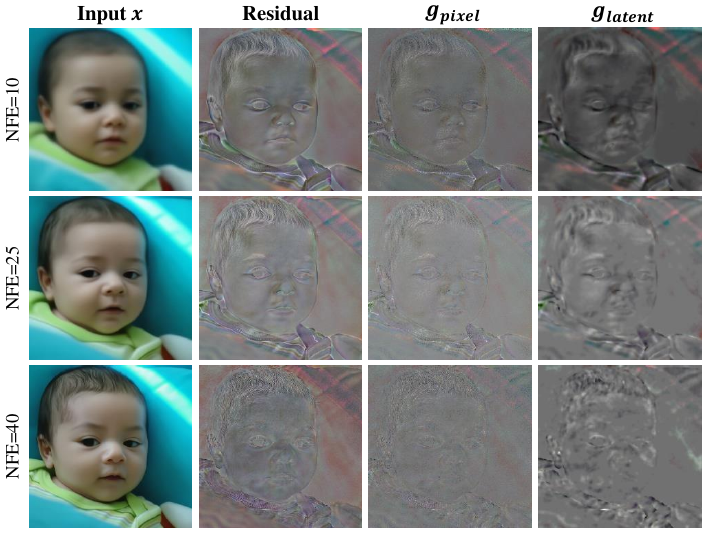}
        \vspace{-2pt}
        \caption{Gradient visualization.}
        \label{fig:vis_large}
    \end{subfigure}

    \caption{\textbf{Latent vs.\ pixel-space gradient dynamics.} (a)~Latent gradients decay by orders of magnitude and vanish in late stages, while pixel gradients remain stable. (b)~Pixel gradients are spatially precise; latent gradients lose structural information after passing through the decoder bottleneck.}
    \vspace{-0.3cm}
    \label{fig:problems}
\end{figure*}

\subsection{Why Latent-Only Optimization Fails}
\label{sec:motivation}

While the generative prior is inherently maintained by the flow trajectory~\cite{kim2025flowdps,park2025flowlps} (or an explicit regularizer $R(\bm{z})$~\cite{erbach2025FLAIR}), the prevailing strategy for enforcing data consistency is \emph{latent optimization}:
\begin{equation}
  \bm{z}^* = \argmin_{\bm{z}} \Loss_{\text{rec}}(\bm{z}), \quad \Loss_{\text{rec}}(\bm{z}) = \tfrac{1}{2}\|\mathcal{A}(D(\bm{z})) - \bm{y}\|^2.
  \label{eq:latent_loss}
\end{equation}
The gradient is $\nabla_{\bm{z}} \Loss_{\text{rec}} = J_D^\top \mathcal{A}^\top (\mathcal{A}(D(\bm{z})) - \bm{y})$, where $J_D \in \R^{n \times d}$ is the decoder Jacobian.
Although this isolates the measurement update properly, geometric analysis reveals a fundamental structural flaw: optimization occurs in a severely rank-deficient representation space. 
For instance, under widely used architectures (e.g., $4$ latent channels and an $8\times$ spatial downsampling factor), an image of resolution $H \times W$ possesses $n = 3HW$ pixel dimensions but only $d = 4(H/8)(W/8) = HW/16$ latent dimensions. Thus, $d = n/48$, meaning the latent space retains merely $\approx 2.1\%$ of the original degrees of freedom.
Consequently, the column space $\mathcal{R}(J_D)$ forms merely a low-dimensional tangent subspace within the pixel space.
Measurement residuals lying in the orthogonal complement of this subspace are mathematically invisible to latent updates. We formalize this as \emph{First-Order Manifold Blindness}:

\begin{proposition}[First-Order Manifold Blindness]
\label{prop:blindness}
Let \hfil \penalty100 \hfilneg $\bm{g}_{\textup{pixel}} = \mathcal{A}^\top(\mathcal{A}(D(\bm{z})) - \bm{y})$ be the back-projected gradient and decompose it as $\bm{g}_{\textup{pixel}} = \bm{g}_{\parallel} + \bm{g}_{\perp}$, where $\bm{g}_{\parallel} \in \mathcal{R}(J_D)$ and $\bm{g}_{\perp} \in \mathcal{R}(J_D)^\perp$.
Then, $\nabla_{\bm{z}} \Loss_{\textup{rec}} = J_D^\top \bm{g}_{\parallel}$; the component $\bm{g}_{\perp}$ is invisible to any latent update.
\end{proposition}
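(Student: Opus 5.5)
The argument is pure linear algebra at a fixed latent point $\bm{z}$, and I would carry it out in three steps.

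\textbf{Step 1 (chain rule).} Write $\Loss_{\text{rec}}(\bm{z}) = \tfrac12\|\mathcal{A}(D(\bm{z}))-\bm{y}\|^2$ with $\mathcal{A}$ linear, as in Eq.~\eqref{eq:measurement}. The chain rule gives
\[
\nabla_{\bm{z}}\Loss_{\text{rec}} = J_D^\top \mathcal{A}^\top(\mathcal{A}(D(\bm{z}))-\bm{y}) = J_D^\top \bm{g}_{\textup{pixel}},
\]
which is the expression already recorded before the proposition.

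\textbf{Step 2 (orthogonal decomposition).} Since $\mathcal{R}(J_D)$ is a subspace of $\R^n$, there is an orthogonal decomposition $\R^n = \mathcal{R}(J_D)\oplus\mathcal{R}(J_D)^\perp$. This guarantees that the splitting $\bm{g}_{\textup{pixel}}=\bm{g}_\parallel+\bm{g}_\perp$ exists and is unique, with $\bm{g}_\parallel = P\bm{g}_{\textup{pixel}}$ for the orthogonal projector $P$ onto $\mathcal{R}(J_D)$.

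\textbf{Step 3 (the key identity).} I would show $\mathcal{R}(J_D)^\perp = \mathcal{N}(J_D^\top)$, the null space of $J_D^\top$. For any $\bm{v}\in\R^n$ and $\bm{u}\in\R^d$ we have $\langle J_D^\top\bm{v},\bm{u}\rangle = \langle \bm{v}, J_D\bm{u}\rangle$. Hence $\bm{v}\perp\mathcal{R}(J_D)$ if and only if $J_D^\top\bm{v}$ is orthogonal to every $\bm{u}$, that is, $J_D^\top\bm{v}=\bm{0}$. Applying this to $\bm{g}_\perp$ gives $J_D^\top\bm{g}_\perp=\bm{0}$, and linearity then yields $\nabla_{\bm{z}}\Loss_{\text{rec}} = J_D^\top\bm{g}_\parallel$.

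To make ``invisible to any latent update'' precise, I would note two consequences. First, the gradient, and therefore any gradient-based step (plain, momentum, or preconditioned), depends on $\bm{g}_{\textup{pixel}}$ only through $\bm{g}_\parallel$. Replacing $\bm{g}_\perp$ by any other vector in $\mathcal{R}(J_D)^\perp$ leaves the update unchanged. Second, in the opposite direction, any latent perturbation $\delta\bm{z}$ changes the decoded image to first order by $J_D\delta\bm{z}\in\mathcal{R}(J_D)$. Its first-order effect on the loss is therefore $\langle\bm{g}_{\textup{pixel}}, J_D\delta\bm{z}\rangle = \langle\bm{g}_\parallel, J_D\delta\bm{z}\rangle$, so the component $\bm{g}_\perp$ can be neither observed nor reduced at first order.

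There is no real technical obstacle here. The only point needing care is interpretive: the statement is first-order and pointwise in $\bm{z}$. Because $J_D$ varies with $\bm{z}$, I would explicitly restrict the claim to the local linearization, which leaves room for the second-order resolution mechanism that Theorem~\ref{thm:resolution} is said to provide.
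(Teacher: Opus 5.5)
Your proposal is correct and follows the paper's argument, which is the one-line observation that $J_D^\top \bm{g}_\perp = \bm{0}$ whenever $\bm{g}_\perp \perp \mathcal{R}(J_D)$, applied by linearity to the chain-rule gradient $J_D^\top \bm{g}_{\textup{pixel}}$. Your Steps 1--3 spell out that justification, including the identity $\mathcal{R}(J_D)^\perp = \mathcal{N}(J_D^\top)$, and restricting ``invisible'' to a pointwise, first-order statement is a sensible clarification.
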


\noindent This follows directly from $J_D^\top \bm{g}_{\perp} = \bm{0}$ for any $\bm{g}_{\perp} \perp \mathcal{R}(J_D)$: the latent gradient is merely the \emph{projection} of the pixel gradient onto the decoder's tangent space.
High-frequency residuals that the decoder cannot locally represent are thus completely discarded. This inevitably causes the optimizer to stagnate---even if the decoder theoretically possesses the capacity to represent the target image, first-order gradient updates cannot access the necessary orthogonal directions for repair.

\textbf{Empirical evidence.}
Figure~\ref{fig:problems} corroborates this analysis.
Tracking gradient norms (Fig.~\ref{fig:problems}(a)) shows that pixel-space gradients $\nabla_{\bm{x}} \Loss$ maintain stable magnitudes throughout the sampling trajectory, while latent gradients $\nabla_{\bm{z}} \Loss$ fluctuate wildly---decaying by approximately five orders of magnitude in inpainting tasks.
Visualizing the gradient maps (Fig.~\ref{fig:problems}(b)) reveals further degradation: $\nabla_{\bm{x}} \Loss$ is spatially precise with high structural correlation to the residual error, whereas $\nabla_{\bm{z}} \Loss$ appears unstructured and coarse, confirming that the decoder bottleneck erases the fine-grained directional information necessary for reliable guidance.

These observations motivate a decoupled design: perform measurement consistency in the pixel space where gradients are full-rank and stable, and use the latent space strictly for prior evolution.

\begin{figure*}[ht]
  \centering
  \includegraphics[width=\linewidth]{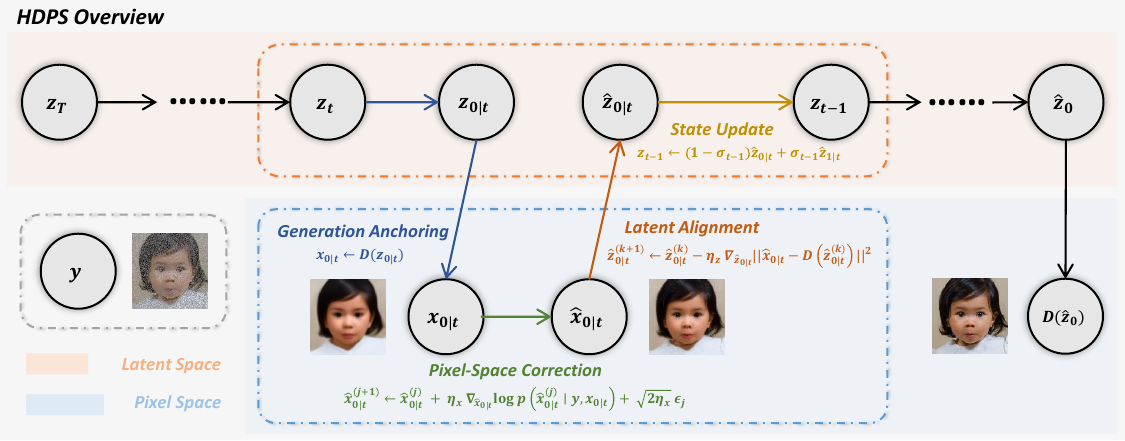}
  \caption{Overview of the proposed decoupled framework.
  Our framework decouples image-space posterior sampling from latent prior modeling, avoiding unstable composite gradients while preserving the expressiveness of the learned prior.}
  \vspace{-0.2cm}
  \label{fig:overview}
\end{figure*}


\begin{algorithm}[t]
\caption{Hybrid-Domain Posterior Sampling}
\label{alg:method}
\SetAlgoLined
\DontPrintSemicolon
\KwIn{Measurement $\bm{y}$, Operator $\mathcal{A}$, Models $v_\theta, (E, D)$, Schedules $\sigma_t, \tau_t$, Params $t_0, \Delta t, N_x, N_z$}
\KwOut{Reconstruction $\hat{\bm{x}}_0$}
\BlankLine
Initialize $\bm{z}_{t_0} \leftarrow \sigma_{t_0}\bm{\epsilon} + (1 - \sigma_{t_0})E(\mathcal{A}^\dagger \bm{y})$\;
\For{$t = t_0, t_0-\Delta t, \dots, 0$}{
  \tcp{1. Generation Anchoring}
  Estimate $\bm{z}_{0|t}$ via flow CFG (Eq.~\eqref{eq:pred_z0t})\;
  Decode anchor $\bm{x}_{0|t} \leftarrow D(\bm{z}_{0|t})$\;
  \tcp{2. Pixel-Space Correction}
  Obtain $\hat{\bm{x}}_{0|t}$ via $N_x$ Langevin steps from $\bm{x}_{0|t}$ (Eq.~\eqref{eq:langevin})\;
  
  \tcp{3. Latent Alignment}
  Obtain $\hat{\bm{z}}_{0|t} $ via $N_z$ steps of decoder inversion from $\bm{z}_{0|t}$ (Eq.~\eqref{eq:alignment})\;
  
  \tcp{4.  State Update}
  Compute next state $\bm{z}_{t - \Delta t}$ with Eq.~\eqref{eq:state_update}\;
}
\Return $\hat{\bm{x}}_0= D(\hat{\bm{z}}_0)$\;
\end{algorithm}

\section{Hybrid-Domain Posterior Sampling}
\label{sec:method}

We now present HDPS, a framework that disentangles measurement correction from prior modeling by alternating between two complementary domains.
Figure~\ref{fig:overview} illustrates the overall pipeline.
At each reverse-process timestep $t$, HDPS executes four stages: (1)~\emph{Generation Anchoring}---the flow model predicts a clean latent estimate; (2)~\emph{Pixel-Space Correction}---Langevin dynamics refine the decoded image to satisfy measurement constraints; (3)~\emph{Latent Alignment}---optimization-based inversion maps the refined image back onto the decoder manifold; and (4)~\emph{State Update}---the corrected latent is integrated into the flow trajectory.
These stages form a strict sequential cycle,
\begin{equation*}
\begin{aligned}
\bm{z}_t &\xrightarrow{\mathrm{flow}} \bm{z}_{0|t} \xrightarrow{D} \bm{x}_{0|t}
\xrightarrow{\mathrm{Langevin}} \hat{\bm{x}}_{0|t} \xrightarrow{\mathrm{alignment}} \hat{\bm{z}}_{0|t}
\xrightarrow{\mathrm{update}} \bm{z}_{t-\Delta t},
\end{aligned}
\end{equation*}
rather than two parallel paths or a post-hoc pixel refinement. In particular, the corrected image is explicitly mapped back by decoder inversion, and the resulting $\hat{\bm{z}}_{0|t}$ enters Eq.~\eqref{eq:state_update}; hence every subsequent latent state is directly conditioned on the preceding pixel-space correction. Although the flow prior is not included as an explicit penalty in the alignment objective, it remains active through the flow-predicted initialization at every outer step and through the state update.
The complete algorithm is summarized in Algorithm~\ref{alg:method}.

\subsection{Generation Anchoring via Flow Prediction}
\label{sec:anchoring}

Given the current noisy state $\bm{z}_t$, we estimate the clean latent code $\bm{z}_{0|t}$ using the flow network $v_\theta$ with classifier-free guidance (CFG)~\cite{ho2021classifierfree}:
\begin{equation}
  \bm{z}_{0|t} = \bm{z}_t - \sigma_t \left[ (1 + \lambda)v_\theta(\bm{z}_t, t, c) - \lambda v_\theta(\bm{z}_t, t, \varnothing) \right],\label{eq:pred_z0t}
\end{equation}
where $c$ is an optional text condition (detailed in Sec.~\ref{sec:experiments}).
The decoded anchor $\bm{x}_{0|t} = D(\bm{z}_{0|t})$ is the model's best estimate of the clean image \emph{before} enforcing data consistency.

\subsection{Pixel-Space Correction}
\label{sec:correction}

To recover high-frequency details lost by the decoder's rank-deficient Jacobian, we perform posterior sampling directly in pixel space.
We treat the decoded anchor $\bm{x}_{0|t}$ as a Gaussian prior and sample from the approximate posterior
\begin{equation}
  p(\hat{\bm{x}}_{0|t} \mid \bm{x}_{0|t}, \bm{y}) \propto \exp\!\left(-\frac{\|\mathcal{A}\hat{\bm{x}}_{0|t} - \bm{y}\|^2}{2\sigma_y^2}\right) \cdot \exp\!\left(-\frac{\|\hat{\bm{x}}_{0|t} - \bm{x}_{0|t}\|^2}{2\tau_t^2}\right),
\end{equation}
where $\tau_t = \sigma_t / \sqrt{1 + \sigma_t^2}$ is an empirical schedule~\cite{ho2022video,song2023loss,zhang2025improving} that permits larger deviations from the anchor at high noise levels and tightens the constraint as $t \to 0$.
We execute $N_x$ steps of Langevin dynamics initialized at $\hat{\bm{x}}^{(0)}_{0|t} = \bm{x}_{0|t}$:
\begin{equation}
  \hat{\bm{x}}_{0|t}^{(j+1)} = \hat{\bm{x}}_{0|t}^{(j)} + \eta_x \!\left[
    \frac{\mathcal{A}^\top(\bm{y} - \mathcal{A}\hat{\bm{x}}_{0|t}^{(j)})}{\sigma_y^2}
    - \frac{\hat{\bm{x}}_{0|t}^{(j)} - \bm{x}_{0|t}}{\tau_t^2}
  \right] + \sqrt{2\eta_x}\,\bm{\xi}_j.
  \label{eq:langevin}
\end{equation}
After $N_x$ iterations, we obtain the corrected estimate $\hat{\bm{x}}_{0|t} = \hat{\bm{x}}_{0|t}^{(N_x)}$.
Unlike latent-space updates, the gradient $\nabla_{\bm{x}} \log p(\bm{y} \mid \bm{x})$ operates directly on pixels, enabling updates orthogonal to $\mathcal{R}(J_D)$ and thus resolving the manifold blindness identified in Proposition~\ref{prop:blindness}.

\paragraph{Relation to DAPS}
Our pixel-space correction draws on the annealed Langevin strategy of DAPS~\cite{zhang2025improving}, but serves a fundamentally different role: rather than being the complete solver, it functions as an \emph{intermediate correction layer} within the latent flow trajectory.
The decoded anchor $\bm{x}_{0|t}$ replaces the diffusion model's denoised estimate, ensuring tight coupling between the two domains.
The novelty lies not in the Langevin step itself, but in its integration with latent-space flow matching and the subsequent latent alignment.

\begin{table*}[t]
    \centering
    \caption{Quantitative results on five linear inverse problems across three datasets at $768 \times 768$ resolution.
    \textbf{Bold} = best; \underline{underline} = second best. All measurements corrupted with $\sigma_n = 0.03$.}
    \vspace{-0.3cm}
    \resizebox{\linewidth}{!}{
    \begin{tabular}{ccccccccccccccccc}
    \toprule
    \rowcolor{Gray}
    & \multicolumn{16}{c}{FFHQ 1k (768 $\times$ 768)}\\
    & \multicolumn{3}{c}{Random Inpainting} & \multicolumn{3}{c}{Gaussian Deblur} & \multicolumn{3}{c}{Motion Deblur} & \multicolumn{3}{c}{SR x12 (Bicubic)} & \multicolumn{3}{c}{SR x12 (Avgpool)}  \\
    \midrule
    Method &
    PSNR$\uparrow$ & SSIM$\uparrow$ &LPIPS$\downarrow$ & 
    PSNR$\uparrow$ & SSIM$\uparrow$ &LPIPS$\downarrow$ & 
    PSNR$\uparrow$ & SSIM$\uparrow$ &LPIPS$\downarrow$ & 
    PSNR$\uparrow$ & SSIM$\uparrow$ &LPIPS$\downarrow$ &
    PSNR$\uparrow$ & SSIM$\uparrow$ &LPIPS$\downarrow$ \\
    \midrule
    LatentDAPS & 27.72 & 0.692 & 0.092 & 25.85 & 0.771 & 0.213 & 25.91 & 0.587 & 0.149 & 26.50 & 0.778 & 0.187 & 26.05 & 0.716 & 0.186      \\
    ReSample & 28.18 & 0.748 & 0.087 & 22.53 & 0.423 & 0.301 & 24.26 & 0.532 & 0.163 & 24.22 & 0.536 & 0.251 & 24.01 & 0.524 & 0.252 &    \\
    FlowChef & 26.95 & 0.757 & 0.181 & 24.99 & 0.706 & 0.230 & 27.12 & 0.756 & 0.158 & 25.59 & 0.715 & 0.220 & 25.51 & 0.717 & 0.216 &    \\
    FlowDPS  & 29.62 & 0.830 & 0.114 & 26.50 & 0.763 & 0.197 & 29.01 & 0.803 & 0.119 & \underline{27.28} & \underline{0.770} & \textbf{0.152} & \textbf{27.11} & \textbf{0.770} & \underline{0.158} &    \\
    FLAIR    & \underline{32.74} & \underline{0.888} & \underline{0.020} & \underline{28.31} & \underline{0.768} & \underline{0.093} & \underline{30.56} & \underline{0.823} & \underline{0.032} & 25.40 & 0.667 & 0.234 & 24.75 & 0.617 & 0.243 &    \\
    HDPS (Ours)     & \textbf{34.62} & \textbf{0.923} & \textbf{0.014} & \textbf{30.06} & \textbf{0.814} & \textbf{0.070} & \textbf{32.83} & \textbf{0.892} & \textbf{0.025} & \textbf{27.74} & \textbf{0.790} & \underline{0.154} & \underline{26.99} & \underline{0.719} & \textbf{0.146} &    \\
    \midrule
    \rowcolor{Gray}
    & \multicolumn{16}{c}{AFHQ 1k (768 $\times$ 768)}\\
    & \multicolumn{3}{c}{Random Inpainting} & \multicolumn{3}{c}{Gaussian Deblur} & \multicolumn{3}{c}{Motion Deblur} & \multicolumn{3}{c}{SR x12 (Bicubic)} & \multicolumn{3}{c}{SR x12 (Avgpool)}  \\
    \midrule
    Method &
    PSNR$\uparrow$ & SSIM$\uparrow$ &LPIPS$\downarrow$ & 
    PSNR$\uparrow$ & SSIM$\uparrow$ &LPIPS$\downarrow$ & 
    PSNR$\uparrow$ & SSIM$\uparrow$ &LPIPS$\downarrow$ & 
    PSNR$\uparrow$ & SSIM$\uparrow$ &LPIPS$\downarrow$ &
    PSNR$\uparrow$ & SSIM$\uparrow$ &LPIPS$\downarrow$ \\
    \midrule
    LatentDAPS & 28.37 & 0.699 & 0.086 & 25.72 & 0.731 & 0.231 & 26.03 & 0.594 & 0.144 & 26.33 & 0.740 & 0.203 & 25.92 & 0.690 & 0.198      \\
    ReSample & 29.08 & 0.752 & 0.085 & 24.12 & 0.509 & 0.250 & 26.05 & 0.609 & 0.120 & 25.22 & 0.576 & 0.232 & 25.01 & 0.567 & 0.234 &    \\
    FlowChef & 26.12 & 0.717 & 0.217 & 24.64 & 0.664 & 0.251 & 26.59 & 0.717 & 0.183 & 25.34 & 0.680 & 0.222 & 25.24 & 0.679 & 0.225 &    \\
    FlowDPS  & 28.74 & 0.793 & 0.149 & 26.38 & 0.733 & 0.234 & 28.55 & 0.776 & 0.148 & \underline{27.17} & \underline{0.743} & \underline{0.177} & \textbf{26.96} & \textbf{0.742} & \underline{0.187} &    \\
    FLAIR    & \underline{32.97} & \underline{0.879} & \underline{0.023} & \underline{28.00} & \underline{0.735} & \underline{0.106} & \underline{30.66} & \underline{0.809} & \underline{0.036} & 25.47 & 0.652 & 0.226 & 25.07 & 0.626 & 0.232 &    \\
    HDPS (Ours)     & \textbf{34.34} & \textbf{0.912} & \textbf{0.016} & \textbf{29.63} & \textbf{0.780} & \textbf{0.078} & \textbf{32.38} & \textbf{0.869} & \textbf{0.030} & \textbf{27.42} & \textbf{0.751} & \textbf{0.170} & \underline{26.90} & \underline{0.697} & \textbf{0.165} &    \\
    \midrule
    \rowcolor{Gray}
    & \multicolumn{16}{c}{DIV2K 0.8k (768 $\times$ 768)}\\
    & \multicolumn{3}{c}{Random Inpainting} & \multicolumn{3}{c}{Gaussian Deblur} & \multicolumn{3}{c}{Motion Deblur} & \multicolumn{3}{c}{SR x12 (Bicubic)} & \multicolumn{3}{c}{SR x12 (Avgpool)}  \\
    \midrule
    Method &
    PSNR$\uparrow$ & SSIM$\uparrow$ &LPIPS$\downarrow$ & 
    PSNR$\uparrow$ & SSIM$\uparrow$ &LPIPS$\downarrow$ & 
    PSNR$\uparrow$ & SSIM$\uparrow$ &LPIPS$\downarrow$ & 
    PSNR$\uparrow$ & SSIM$\uparrow$ &LPIPS$\downarrow$ &
    PSNR$\uparrow$ & SSIM$\uparrow$ &LPIPS$\downarrow$ \\
    \midrule
    LatentDAPS & 24.61 & 0.669 & 0.093 & 20.94 & 0.532 & 0.309 & 21.48 & 0.474 & 0.166 & 21.12 & 0.537 & 0.287 & 20.58 & 0.498 & 0.276      \\
    ReSample & 23.77 & 0.627 & 0.119 & 19.06 & 0.309 & 0.324 & 20.61 & 0.439 & 0.159 & 19.12 & 0.315 & 0.318 & 18.86 & 0.302 & 0.323 &    \\
    FlowChef & 20.85 & 0.525 & 0.314 & 18.89 & 0.436 & 0.363 & 20.98 & 0.522 & 0.262 & 19.59 & 0.465 & 0.315 & 19.35 & 0.460 & 0.321 &    \\
    FlowDPS  & 24.21 & 0.667 & 0.157 & 20.79 & 0.512 & 0.301 & 23.39 & 0.614 & 0.154 & \underline{21.30} & \underline{0.529} & \textbf{0.246} & \textbf{21.08} & \textbf{0.525} & \underline{0.250} &    \\
    FLAIR    & \underline{27.08} & \underline{0.822} & \underline{0.021} & \underline{22.26} & \underline{0.555} & \underline{0.157} & \underline{25.05} & \underline{0.710} & \underline{0.042} & 20.40 & 0.473 & 0.285 & 20.10 & 0.457 & 0.288 &    \\
    HDPS (Ours)     & \textbf{27.90} & \textbf{0.845} & \textbf{0.015} & \textbf{23.46} & \textbf{0.604} & \textbf{0.128} & \textbf{26.33} & \textbf{0.761} & \textbf{0.036} & \textbf{21.60} & \textbf{0.548} & \underline{0.250} & \underline{21.05} & \underline{0.504} & \textbf{0.240} &    \\
    \bottomrule
    \end{tabular}
    }    
    \label{tab:quantitative}
    \vspace{-0.2cm}
\end{table*}

\subsection{Latent Alignment via Decoder Inversion}
\label{sec:projection}

The corrected image $\hat{\bm{x}}_{0|t}$ satisfies measurements but may contain artifacts from the Langevin process.
To restore generative consistency, we project $\hat{\bm{x}}_{0|t}$ back onto the decoder manifold via test-time optimization initialized at $\hat{\bm{z}}^{(0)}_{0|t} = \bm{z}_{0|t}$:
\begin{equation}
  \hat{\bm{z}}^{(k+1)}_{0|t} = \hat{\bm{z}}^{(k)}_{0|t} - \eta_z\,\nabla_{\bm{z}} \|\hat{\bm{x}}_{0|t} - D(\hat{\bm{z}}^{(k)}_{0|t})\|^2,
  \label{eq:alignment}
\end{equation}
and iterated for $N_z$ steps to obtain $\hat{\bm{z}}_{0|t}=\hat{\bm{z}}^{(N_z)}_{0|t}$.

\paragraph{Why not encode directly?}
A natural alternative is direct encoding $\hat{\bm{z}} = E(\hat{\bm{x}})$, which is computationally cheaper but fundamentally suboptimal.
The encoder $E$, trained on clean natural images, is highly sensitive to the non-Gaussian artifacts introduced by pixel-space gradients, often mapping them to incorrect semantic regions of $\Z$.
In contrast, decoder inversion acts as a robust \emph{manifold filter}: it finds the nearest valid latent code that reproduces the measurement-consistent content of $\hat{\bm{x}}$ while aggressively ignoring artifacts the decoder cannot generate.
As shown in our ablations (Sec.~\ref{sec:exp_ablation_decoupling}), this yields a \textbf{+3.4\,dB} improvement over encoder-based projection.

\textbf{Theoretical Justification.}
We now show that the pixel-space correction followed by latent alignment resolves the blindness identified in Proposition~\ref{prop:blindness}.

\begin{theorem}[Resolution of Manifold Blindness]
\label{thm:resolution}
Let $\bm{x}_{0|t} = D(\bm{z}_{0|t})$ and $\bm{g} = \nabla_{\bm{x}} \Loss_{\textup{rec}}(\bm{x}_{0|t}) = \bm{g}_{\parallel} + \bm{g}_{\perp}$ with $\bm{g}_{\parallel} \in \mathcal{R}(J_D)$, $\bm{g}_{\perp} \in \mathcal{R}(J_D)^\perp$.
Assume $D$ is twice differentiable with Hessian tensor $\mathcal{H}_D$.
Consider the HDPS update: $\hat{\bm{x}}_{0|t} = \bm{x}_{0|t} - \eta_x \bm{g}$, $\hat{\bm{z}}_{0|t} = \argmin_{\bm{z}} \|\hat{\bm{x}}_{0|t} - D(\bm{z})\|^2$.
Then the effective update $\Delta \bm{x} = D(\hat{\bm{z}}_{0|t}) - \bm{x}_{0|t}$ satisfies
\begin{equation}
    \langle \Delta \bm{x}, \bm{g}_{\perp} \rangle = \tfrac{1}{2}\eta_x^2 \, \bm{v}^\top \!\left(\bm{g}_{\perp}^\top \mathcal{H}_D\right) \bm{v} + \mathcal{O}(\eta_x^3),
\end{equation}
where $\bm{v} = -(J_D^\top J_D)^{\dagger} J_D^\top \bm{g}_{\parallel}$.
\end{theorem}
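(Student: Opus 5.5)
Let $\bm{z}_0 := \bm{z}_{0|t}$ and $J := J_D(\bm{z}_0)$. The plan is a perturbation expansion of the inversion problem in powers of $\eta_x$. We write $\hat{\bm{z}}_{0|t} = \bm{z}_0 + \bm{\delta}(\eta_x)$, expand $D$ to second order around $\bm{z}_0$, and then pair the resulting $\Delta\bm{x}$ with $\bm{g}_{\perp}$. The first-order term drops out by Proposition~\ref{prop:blindness}, since $J\bm{\delta}\in\mathcal{R}(J_D)$ and hence $\langle J\bm{\delta},\bm{g}_{\perp}\rangle=0$. This leaves only the curvature term. Concretely,
\begin{equation*}
\Delta\bm{x} = J\bm{\delta} + \tfrac12\,\mathcal{H}_D[\bm{\delta},\bm{\delta}] + \mathcal{O}(\|\bm{\delta}\|^3),
\end{equation*}
so that
\begin{equation*}
\langle\Delta\bm{x},\bm{g}_{\perp}\rangle = \tfrac12\,\bm{\delta}^\top(\bm{g}_{\perp}^\top\mathcal{H}_D)\bm{\delta} + \mathcal{O}(\|\bm{\delta}\|^3).
\end{equation*}
It therefore suffices to show $\bm{\delta} = \eta_x\bm{v} + \mathcal{O}(\eta_x^2)$. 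Substituting this gives $\tfrac12\eta_x^2\,\bm{v}^\top(\bm{g}_{\perp}^\top\mathcal{H}_D)\bm{v} + \mathcal{O}(\eta_x^3)$.

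To identify $\bm{\delta}$ to first order, we use the first-order optimality condition of the inversion:
\begin{equation*}
J_D(\bm{z}_0+\bm{\delta})^\top\big(D(\bm{z}_0+\bm{\delta}) - \bm{x}_0 + \eta_x\bm{g}\big) = \bm{0}.
\end{equation*}
Linearising in $(\bm{\delta},\eta_x)$ gives $J^\top(J\bm{\delta} + \eta_x\bm{g}) + \mathcal{O}(\|\bm{\delta}\|^2+\eta_x\|\bm{\delta}\|) = \bm{0}$. Because $J^\top\bm{g} = J^\top\bm{g}_{\parallel}$, this yields $J^\top J\bm{\delta} = -\eta_x J^\top\bm{g}_{\parallel} + \mathcal{O}(\eta_x^2)$. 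Taking the minimum-norm solution gives $\bm{\delta} = -\eta_x (J^\top J)^\dagger J^\top\bm{g}_{\parallel} + \mathcal{O}(\eta_x^2) = \eta_x\bm{v}+\mathcal{O}(\eta_x^2)$.

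The main obstacle is justifying that the minimiser $\hat{\bm{z}}_{0|t}$ is a well-defined, differentiable branch $\bm{\delta}(\eta_x)$ with $\bm{\delta}(0)=\bm{0}$. The $\argmin$ is global and may jump to a distant latent code, and if $J$ is column-rank deficient the minimiser is not unique.

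\begin{itemize}
\item If $J$ has full column rank $d$ (the generic case, since $d\ll n$), then $J^\top J$ is invertible and the pseudoinverse is an ordinary inverse. We apply the implicit function theorem to $F(\bm{\delta},\eta_x) = J_D(\bm{z}_0+\bm{\delta})^\top(D(\bm{z}_0+\bm{\delta})-\bm{x}_0+\eta_x\bm{g})$. At $(\bm{0},0)$ the residual vanishes, so $\partial_{\bm{\delta}}F = J^\top J$ there, and the second-derivative term multiplies a zero residual. This gives a $C^1$ local branch; twice-differentiability of $D$ supplies the $\mathcal{O}(\eta_x^2)$ remainder.
\item We interpret $\hat{\bm{z}}_{0|t}$ as the minimiser on this local branch. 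This is consistent with the algorithm, which runs gradient descent initialised at $\bm{z}_{0|t}$.
\item In the rank-deficient case, we restrict $\bm{\delta}$ to $\mathcal{R}(J^\top)$, i.e.\ the minimum-norm branch, which explains the $\dagger$. The same argument then goes through on that subspace.
\end{itemize}

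Finally, we collect the remainders. Since $\bm{\delta}=\mathcal{O}(\eta_x)$, the cubic Taylor term and the cross term $\bm{v}^\top(\bm{g}_{\perp}^\top\mathcal{H}_D)\,\mathcal{O}(\eta_x^2)$ are both $\mathcal{O}(\eta_x^3)$. This gives the stated identity. We will remark that the leading term is generically nonzero, so the orthogonal component $\bm{g}_{\perp}$ that first-order latent gradients cannot see is reached at second order through the decoder's curvature.
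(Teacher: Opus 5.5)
Your argument is essentially the paper's proof. You expand the inversion perturbatively in $\eta_x$, use the first-order stationarity condition $J_D^\top(J_D\bm{v}+\bm{g})=\bm{0}$ together with $J_D^\top\bm{g}_{\perp}=\bm{0}$ to identify $\bm{v}$, and use orthogonality of $\bm{g}_{\perp}$ to $\mathcal{R}(J_D)$ so that only the curvature term survives. Your observation that $\langle J_D\bm{\delta},\bm{g}_{\perp}\rangle=0$ holds for the whole displacement $\bm{\delta}$ lets you skip the paper's second-order coefficient $\bm{w}$. Your implicit-function-theorem construction of the local branch, which the paper takes for granted, is a genuine improvement in the full-rank case.

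There are two minor caveats, and the paper's proof handles neither any better. First, an $\mathcal{O}(\eta_x^3)$ remainder needs $D\in C^3$ (or a locally Lipschitz Hessian). With $D\in C^2$ the branch is only $C^1$ and the Taylor remainder is only $o(\|\bm{\delta}\|^2)$, so you get $o(\eta_x^2)$ overall. Second, your rank-deficient bullet does not go through as stated. Restricting $\bm{\delta}$ to $\mathcal{R}(J_D^\top)$ lets the implicit function theorem solve only the projected stationarity equations, so you would need an extra hypothesis such as constant rank of $J_D$ near $\bm{z}_{0|t}$.
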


\begin{proof}
    Please refer to Appendix~\ref{app:proof}.
\end{proof}
\noindent Unlike pure latent optimization where $\langle \Delta \bm{x}, \bm{g}_{\perp} \rangle = 0$, HDPS achieves a non-zero, second-order update that implicitly leverages the decoder's curvature $\mathcal{H}_D$.
By stepping off-manifold into pixel space and projecting back, HDPS ``bends'' the update direction to naturally incorporate orthogonal corrections---\emph{without} computing the expensive Hessian tensor explicitly.

\begin{figure*}[t]
    \centering     
    \begin{tikzpicture}
        \node[anchor=south west,inner sep=0] (image) at (0,0) {\includegraphics[width=0.9\textwidth]{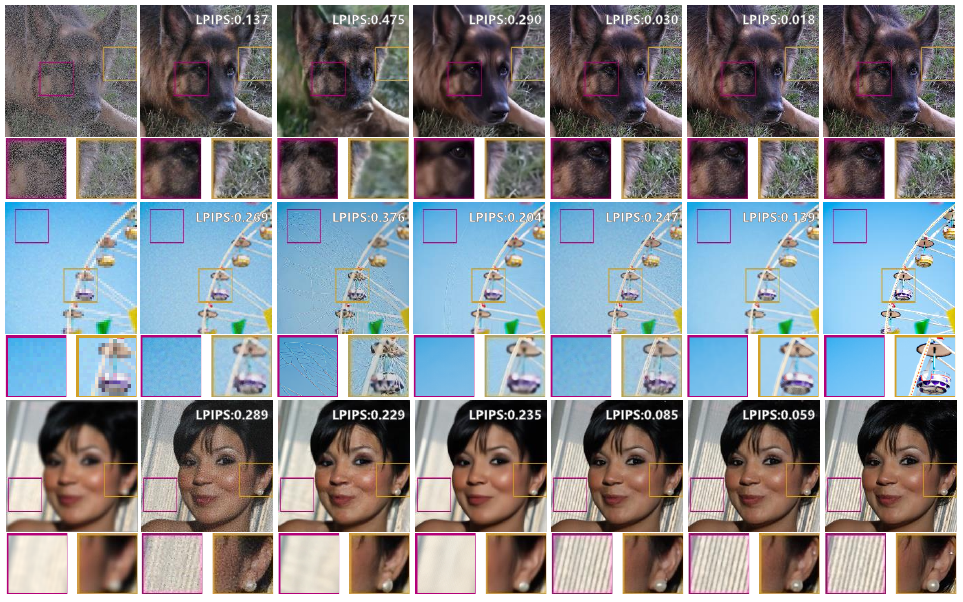}};  
        \begin{scope}[x={(image.south east)},y={(image.north west)}]
            \node at (0.075,1.015) {{Input}};
            \node at (0.21,1.015) {{Resample}};
            \node at (0.355,1.015) {{FlowChef}};
            \node at (0.495,1.015) {{FlowDPS}};
            \node at (0.64,1.015) {{FLAIR}};
            \node at (0.78,1.015) {\textbf{Ours}};
            \node at (0.92,1.015) {{Reference}};
        \end{scope}
        \begin{scope}[x={(image.south east)},y={(image.north west)}]
        \node[rotate=90] at (-0.01,0.85) {{Random Inpainting}};
        \node[rotate=90] at (-0.01,0.50) {{SR x12 (Avgpool)}};
        \node[rotate=90] at (-0.01,0.15) {{Gaussian Deblur}};
    \end{scope}
    \end{tikzpicture}
      \vspace{-0.3cm}
  \caption{Qualitative comparison on five inverse problems. Zoomed-in patches highlight regions where HDPS produces sharper details and fewer artifacts than baselines.}
  \vspace{-0.3cm}
  \label{fig:qualitative}
\end{figure*}

\subsection{State Update}
\label{sec:update}

To robustly preserve the diversity of the posterior distribution and prevent the deterministic ODE from collapsing into local minima due to discretization errors, we integrate the projected semantic content back into the dynamic trajectory using a stochastic injection scheme~\cite{song2021scorebased}. Specifically, we inject structural stochasticity into the noise endpoint $\bm{z}_{1|t}$ by computing $\hat{\bm{z}}_{1|t} = \alpha_t\bm{z}_{1|t} + \sqrt{1-\alpha_t^2} \bm{\epsilon}'$, where $\alpha_t=1-\sigma_t$, $\bm{\epsilon}'\sim \N(\bm{0}, \bm{I})$. The latent state is then securely advanced to $t - \Delta t$:
\begin{equation}
\label{eq:state_update}
  \bm{z}_{t - \Delta t} = (1 - \sigma_{t - \Delta t})\,\hat{\bm{z}}_{0|t} + \sigma_{t - \Delta t}\,\hat{\bm{z}}_{1|t}.
\end{equation}

This mathematically grounded stochastic perturbation continuously ensures the sampler explores high-probability neighborhoods of the true trajectory while remaining tightly guided by the projected measurement constraints $\hat{\bm{z}}_{0|t}$.

\subsection{Initialization Strategy}
\label{sec:init}
Finally, for this hybrid framework to succeed, the initial decoded anchor must reside reasonably close to the natural image manifold. At $t \to 1$ (zero SNR), the decoder outputs chaotic, uninformative geometries, rendering early gradient projections essentially meaningless. Consequently, we instantiate the sampling trajectory at an environmentally stable intermediate threshold $t_0 < 1$ (typically $t_0=0.8$) utilizing an SNR-aware mixed-noise warm-start: \begin{equation}
  \bm{z}_{t_0} = \sigma_{t_0}\,\bm{\epsilon} + (1 - \sigma_{t_0})\,E(\mathcal{A}^\dagger \bm{y}), \quad \bm{\epsilon} \sim \N(\bm{0}, \bm{I}),
  \label{eq:init}
\end{equation}
where $\mathcal{A}^\dagger$ is an inexpensive pseudo-inverse.
This formulation formally bypasses the uninformative zero-SNR regime, firmly anchoring the initial pixel refinement within a valid geometric neighborhood. We provide an analysis validating the effectiveness of this initialization strategy in Section~\ref{sec:exp_Init}. The complete procedure is detailed in Algorithm~\ref{alg:method}.

\subsection{Computational Analysis}
\label{sec:cost}


The per-step cost comprises: (i)~two CFG forward passes of $v_\theta$ (shared with all flow-based baselines); (ii)~$N_x$ pixel-space Langevin steps ($x$-optimization) requiring no neural-network backpropagation; and (iii)~$N_z$ latent alignment gradient steps ($z$-optimization) through $D$ alone.
Traditional latent optimization (e.g., FlowDPS) backpropagates through the composite operator $\mathcal{A} \circ D$. In contrast, our decoupling isolates the measurement physics from the latent prior, transforming complex gradients into two simpler, specialized operations. Our empirical analysis (detailed in Appendix~\ref{app:runtime_analysis}) confirms that the combined execution time of these specialized $x$ and $z$ steps is actually less than evaluating the heavy composite gradient.

\begin{figure*}[t]
    \centering
    \includegraphics[width=0.9\linewidth]{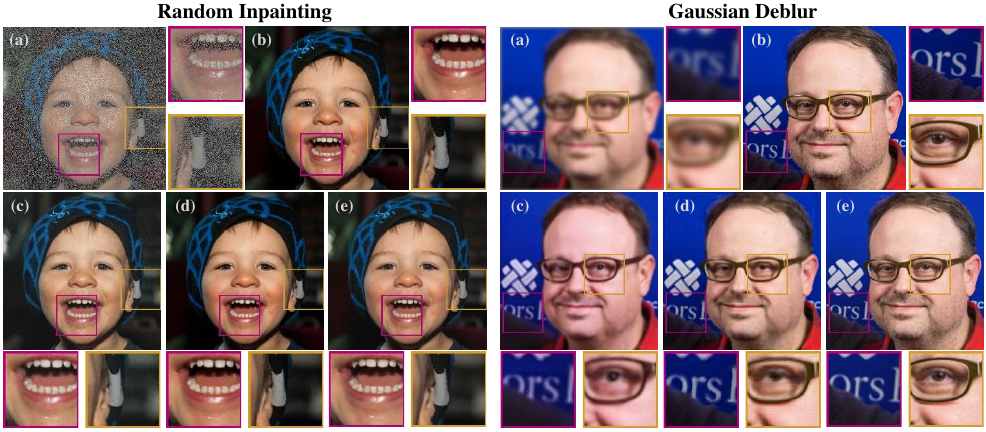}
    \vspace{-0.2cm}
    \caption{Visual ablation. (a)~Measurement. (b)~Reference. (c)~Latent-Only. (d)~Pixel + Encoder. (e)~Ours.} 
    \label{fig:ablation_on_decoupling}
        \vspace{-0.2cm}
\end{figure*}

\section{Experiments}
\label{sec:experiments}

\subsection{Setup}
\label{sec:setup_exp}

\paragraph{Tasks and datasets.}
We evaluate the proposed method on five inverse problems: (i) Gaussian deblurring (kernel size $61$, standard deviation $3.0$), (ii) motion deblurring (kernel size $61$, intensity $0.5$), (iii) $12\times$ super-resolution using bicubic interpolation, (iv) $12\times$ super-resolution using average pooling, and (v) random inpainting with 30\%-70\% pixels masked. To simulate realistic conditions, all measurements are corrupted by additive Gaussian noise with $\sigma_n = 0.03$.

All experiments are conducted on high-resolution images from three widely-used benchmarks, including 1,000 images from the FFHQ~\cite{karras2019style} and AFHQ~\cite{choi2020stargan} validation datasets, respectively, and 800 images from the DIV2K training dataset~\cite{agustsson2017ntire}. We resize all images to a uniform resolution of $768 \times 768$. 

\paragraph{Baselines and metrics.}
We benchmark the proposed HDPS framework against a representative set of state-of-the-art Flow-based solvers: ReSample~\cite{song2024resample}, FlowChef~\cite{patel2025flowchef}, FlowDPS~\cite{kim2025flowdps}, and FLAIR~\cite{erbach2025FLAIR}. We also include LatentDAPS~\cite{zhang2025improving}, which performs posterior sampling directly in the latent space.
To ensure a fair comparison, all solvers are implemented using the pre-trained Stable Diffusion 3 Medium~\cite{esser2024scaling} as the backbone.
For text conditioning, we use dataset-specific prompts (``a photo of a closed face'' for FFHQ, ``a photo of a closed face of a dog/cat'' for AFHQ, and DAPE~\cite{wu2024seesr} captions for DIV2K), with CFG scale 2.0.

We employ a dual assessment strategy: Peak Signal-to-Noise Ratio (PSNR) and Structural Similarity Index (SSIM)~\cite{wang2004image} quantify pixel-level and structural fidelity, while Learned Perceptual Image Patch Similarity (LPIPS)~\cite{zhang2018unreasonable} measures perceptual quality and naturalness.


\subsection{Main Results}
\label{sec:main_results}

\paragraph{Quantitative comparison.}
Table~\ref{tab:quantitative} summarizes results across all datasets and tasks.
HDPS consistently achieves state-of-the-art performance, with the largest gains in tasks involving severe information loss.
On FFHQ, HDPS surpasses the runner-up (FLAIR) by \textbf{+1.88\,dB} in inpainting and \textbf{+2.27\,dB} in motion deblurring, while reducing LPIPS by up to 30\%.
Similar margins are observed on AFHQ (+1.37\,dB / +1.72\,dB).
These gains directly validate our theoretical premise: pixel-space correction recovers high-frequency residuals that latent-only methods discard due to manifold blindness.

In the highly ill-posed $12\times$ super-resolution regime, HDPS remains competitive, achieving the best or second-best scores across all metrics.
Notably, on the general-domain DIV2K dataset, HDPS attains the best distortion--perception balance across all tasks, confirming robustness beyond domain-specific distributions.

\paragraph{Visual comparison.}
Figure~\ref{fig:qualitative} corroborates these quantitative findings.
In $12\times$ SR (Row 2), baselines such as FlowDPS and FlowChef produce broken geometries and aliasing on fine structures (e.g., Ferris wheel struts), whereas HDPS recovers these high-frequency elements faithfully.
In inpainting and deblurring (Rows 1, 3), latent-only methods exhibit over-smoothing (``plastic'' surfaces), while ReSample introduces grid-like artifacts.
HDPS bridges this gap, restoring realistic textures---individual fur strands, skin pores---while maintaining global semantic coherence.

\subsection{Ablation: Decoupling and Alignment Strategy}
\label{sec:exp_ablation_decoupling}
The main contributions of this paper are the decoupled optimization framework and the latent alignment strategy. To validate the effect of each component,
we compare three inference strategies on FFHQ:
    \textbf{(1) Latent-Only}: This variant minimizes the measurement consistency loss via back-propagation through $\mathcal{A} \circ D$;
    \textbf{(2) Pixel + Encoder}: Pixel-space Langevin correction followed by $\bm{z} = E(\hat{\bm{x}})$;
    \textbf{(3) HDPS (Pixel + Alignment)}: Pixel-space Langevin correction followed by decoder inversion (Eq.~\ref{eq:alignment}).
Table~\ref{tab:ablation_on_decoupling} shows clear improvements at each stage.
\textbf{Latent-Only} suffers from stagnation due to manifold blindness, yielding the worst scores.
\textbf{Pixel + Encoder} improves LPIPS but introduces semantic drift---the encoder maps pixel-space artifacts to incorrect latent features, degrading PSNR.
\textbf{Our full HDPS} (Pixel + Alignment) achieves \textbf{+3.38\,dB} over the Encoder variant on inpainting, confirming that optimization-based projection is essential for robustly translating pixel-space corrections into valid latent codes.


Visual results in Figure~\ref{fig:ablation_on_decoupling} demonstrate clear qualitative differences: \textbf{Latent-Only} optimization exhibits noticeable color distortions and spectral instability because back-propagation through the nonlinear decoder amplifies curvature noise. \textbf{Pixel + Encoder} recovers high-frequency details but suffers from semantic drift, as the pre-trained encoder maps gradient artifacts to incorrect latent features. In contrast, \textbf{HDPS} achieves the highest fidelity. The optimization-based projection acts as a robust \emph{manifold filter}, discarding off-manifold Langevin noise while retaining the necessary structural corrections.


\begin{table}[t]
  \centering
  \caption{Ablation study of decoupling and initialization strategies on FFHQ.}
  \vspace{-0.2cm}
  \label{tab:ablation_on_decoupling}
  \resizebox{\linewidth}{!}{
  \begin{tabular}{lcccccc}
    \toprule
     & \multicolumn{3}{c}{Inpainting} & \multicolumn{3}{c}{Gaussian Deblur}\\ 
    Variant  & PSNR$\uparrow$ & SSIM$\uparrow$ & LPIPS$\downarrow$ & PSNR$\uparrow$ & SSIM$\uparrow$ & LPIPS$\downarrow$ \\
    \midrule
    Latent-Only        & 27.19 & 0.683 & 0.101 & 25.98 & 0.781 & 0.207 \\
    Pixel + Encoder    & 30.86 & 0.745 & 0.036 & 27.04 & 0.619 & 0.152 \\
    HDPS w/o Init     & 33.85 & 0.912 & 0.021 & 29.68 & 0.796 & 0.068 \\
    HDPS (Full) & \textbf{34.24} & \textbf{0.922} & \textbf{0.013} & \textbf{30.15} & \textbf{0.823} & \textbf{0.065} \\
    \bottomrule
  \end{tabular}}
      \vspace{-0.2cm}
\end{table}

\subsection{Ablation: Initialization}
\label{sec:exp_Init}

We investigate the initialization time $t_0$ with a fixed budget of NFE\,=\,50. Visual inspection of the intermediate decoded trajectories $D(\bm{z}_{0|t})$ in Figure~\ref{fig:ablation_on_t0} reveals why initialization matters.
At extreme noise levels ($t > 0.8$), the decoded image $D(\bm{z}_{0|t})$ lacks sufficient structural coherence, rendering pixel-space posterior updates uninformative and prone to introducing high-frequency artifacts.
Initializing at $t_0=0.8$ via the SNR-aware strategy effectively circumvents this chaotic regime, ensuring that the semantic guidance from the latent flow prior is geometrically grounded from the very first step.

Table~\ref{tab:ablation_on_decoupling} quantifies this benefit.
The SNR-aware initialization strategy (HDPS full, $t_0=0.8$) consistently outperforms standard Gaussian initialization (HDPS w/o Init, $t_0=1.0$), yielding a performance gain of roughly \textbf{+0.4 dB} in PSNR.

\paragraph{Parameter Sweep.}
To determine the optimal operating point, we performed a parameter sweep varying $t_0 \in \{1.0, 0.9, \dots, 0.5\}$. 
We find that $t_0=0.8$ achieves the optimum balance between structural fidelity and perceptual quality. 
A more detailed quantitative analysis and discussion on the impact of $t_0$ are deferred to Appendix~\ref{app:init_sweep}.


\begin{figure}[t]
    \centering
    \includegraphics[width=\linewidth]{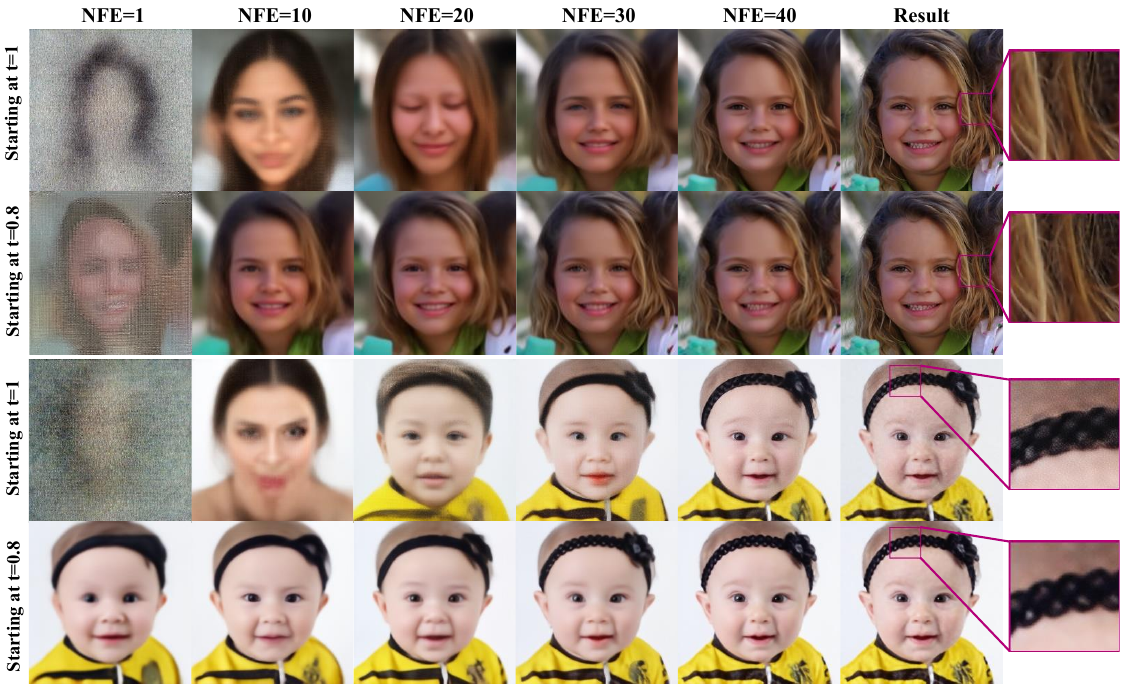}
    \caption{Trajectory evolution. Starting at $t_0=1.0$ produces chaotic early structures that persist as artifacts; warm-start at $t_0=0.8$ yields coherent refinement from the first step.}
    \label{fig:ablation_on_t0}
    \vspace{-0.2cm}
\end{figure}

\subsection{Runtime Analysis}
\label{sec:runtime}

In Table~\ref{tab:runtime}, we compare inference time and reconstruction quality across methods under a fixed NFE budget on FFHQ Gaussian deblur, analyzing both \textcolor{blue}{low-budget} ($N_z=3$) and \textcolor{red}{high-budget} ($N_z=15$) regimes.

Two key empirical observations emerge:
First, \textbf{HDPS ($N_z\!=\!3$) surpasses FlowDPS ($N_z\!=\!15$) by 2.26\,dB} while using $5\times$ fewer latent gradient steps. This confirms that resolving manifold blindness via exact pixel-space physics is fundamentally more effective than blindly scaling iteration counts in the restricted latent space.
Second, at equal $N_z$, HDPS is \textbf{faster} than FlowDPS (e.g., 6.21\,s vs.\ 6.86\,s at $N_z\!=\!3$). FlowDPS must evaluate gradients through the composite operator $\mathcal{A}\circ D$. By decoupling measurement physics from manifold projection, HDPS transforms the complex backward pass into two specialized sub-tasks, thus yielding faster wall-clock execution. We provide further runtime analysis in Appendix~\ref{app:runtime_analysis}.


\subsection{Hyperparameter Sensitivity}
\label{sec:hyperparam}

We analyze the sensitivity of HDPS to the inner-loop iterations: Langevin refinement steps $N_x$ and latent alignment steps $N_z$.
As shown in Figure~\ref{fig:ablation_Nx_Nz}(a), increasing $N_x$ initially improves perceptual quality (lower LPIPS) by recovering high-frequency textures lost by the decoder. However, excessively large $N_x$ ($> 20$) leads to over-sharpening and noise accumulation, degrading PSNR.
Figure~\ref{fig:ablation_Nx_Nz}(b) reveals a similar convex trend for $N_z$. Increasing alignment steps initially improves both metrics by ensuring the latent code accurately reflects pixel-space corrections. Beyond $N_z=15$, the optimization begins to overfit the latent code to the noisy intermediate $\hat{\bm{x}}$, causing performance to decline.
Based on optimal trade-offs, we adopt $N_x = 20$ and $N_z = 15$ as default configurations.

\begin{table}[t]
  \centering
  \caption{Runtime and quality (NFE\,=\,50, Gaussian Deblurring, FFHQ).
  \textcolor{blue}{Blue} / \textcolor{red}{red} = $N\!=\!3$ / $N\!=\!15$. Time on a single RTX 4090.}
  \label{tab:runtime}
  \resizebox{\linewidth}{!}{
  \begin{tabular}{lrrrrrrrr}
    \toprule
    & \textbf{ReSample} & \textbf{FlowChef} & \textbf{FLAIR}
    & \textcolor{blue}{\textbf{FlowDPS}} & \textcolor{blue}{\textbf{HDPS}}
    & \textcolor{red}{\textbf{FlowDPS}} & \textcolor{red}{\textbf{HDPS}} \\
    \midrule
    NFE                           & 50    & 100   & 50    & 50   & 50   & 50    & 50   \\
    Iters ($N_z$, $+N_x$) & 30 & 1 & 15 & \textcolor{blue}{3} & \textcolor{blue}{3\,(+3)} & \textcolor{red}{15} & \textcolor{red}{15\,(+15)} \\
    Time (s)            & 25.25 & 10.27 & 22.13 & \textcolor{blue}{6.86}  & \textcolor{blue}{6.21}  & \textcolor{red}{16.95} & \textcolor{red}{15.46} \\
    PSNR (dB)                     & 22.47 & 25.14 & 28.33 & \textcolor{blue}{26.59} & \textcolor{blue}{28.82} & \textcolor{red}{26.56} & \textcolor{red}{30.18} \\
    \bottomrule
  \end{tabular}}
\end{table}

\begin{figure}[t]
    \centering
    \captionsetup[subfigure]{skip=4pt}
    
    \begin{subfigure}{1.0\linewidth}
        \centering
        \includegraphics[width=0.49\linewidth]{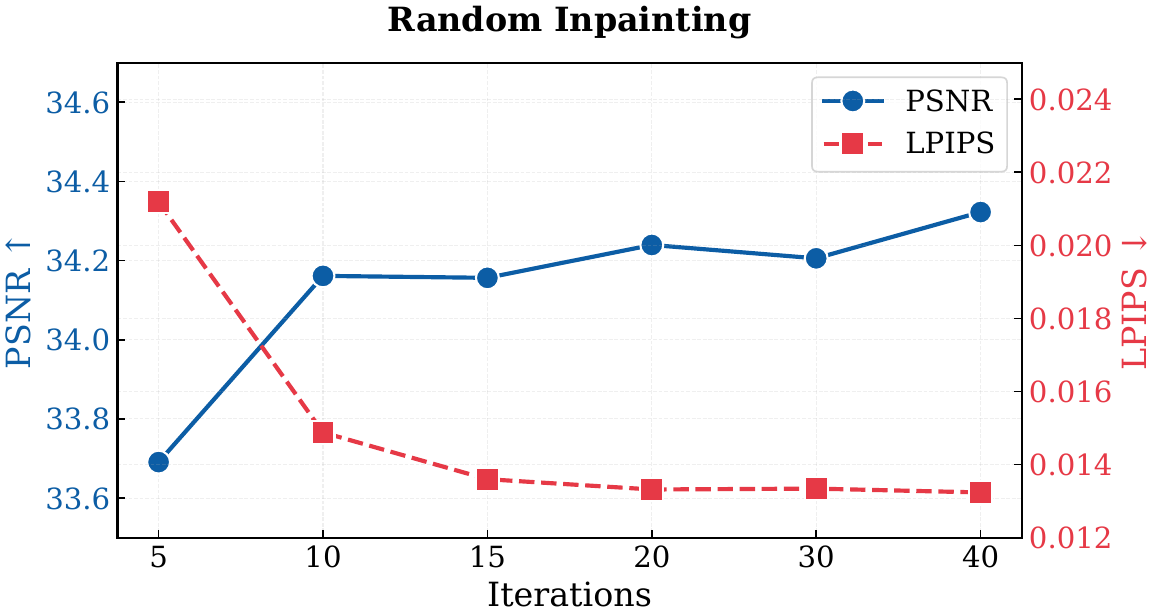}
        \hfill
        \includegraphics[width=0.49\linewidth]{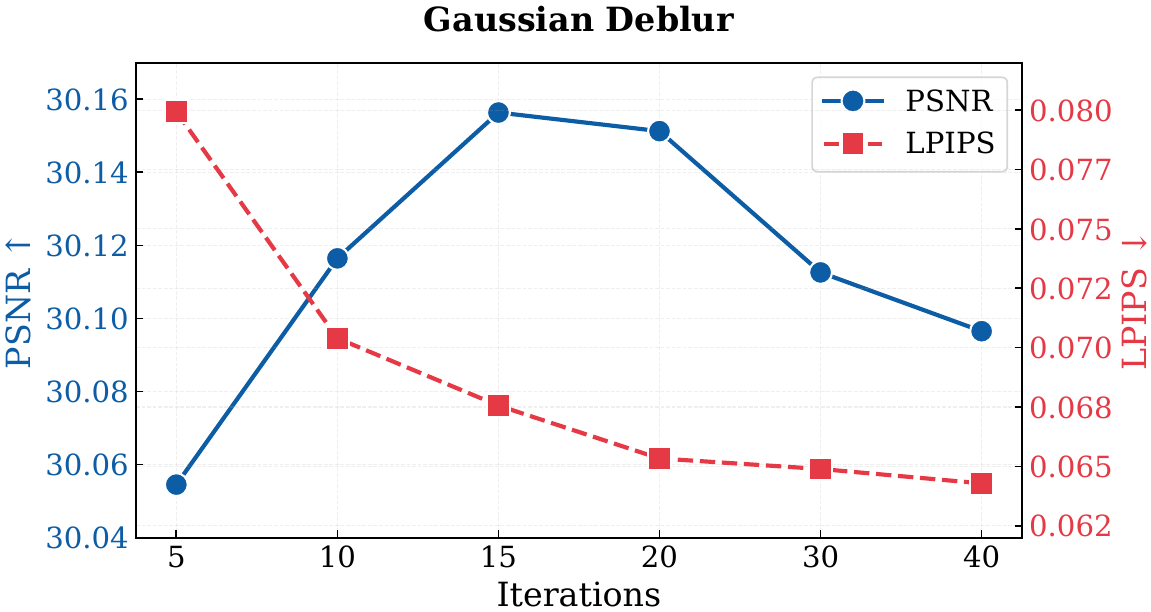}
        \vspace{-2pt}
        \caption*{(a) Effect of pixel-space iterations $N_x$}
    \end{subfigure}

    \vspace{4pt} 

    \begin{subfigure}{1.0\linewidth}
        \centering
        \includegraphics[width=0.49\linewidth]{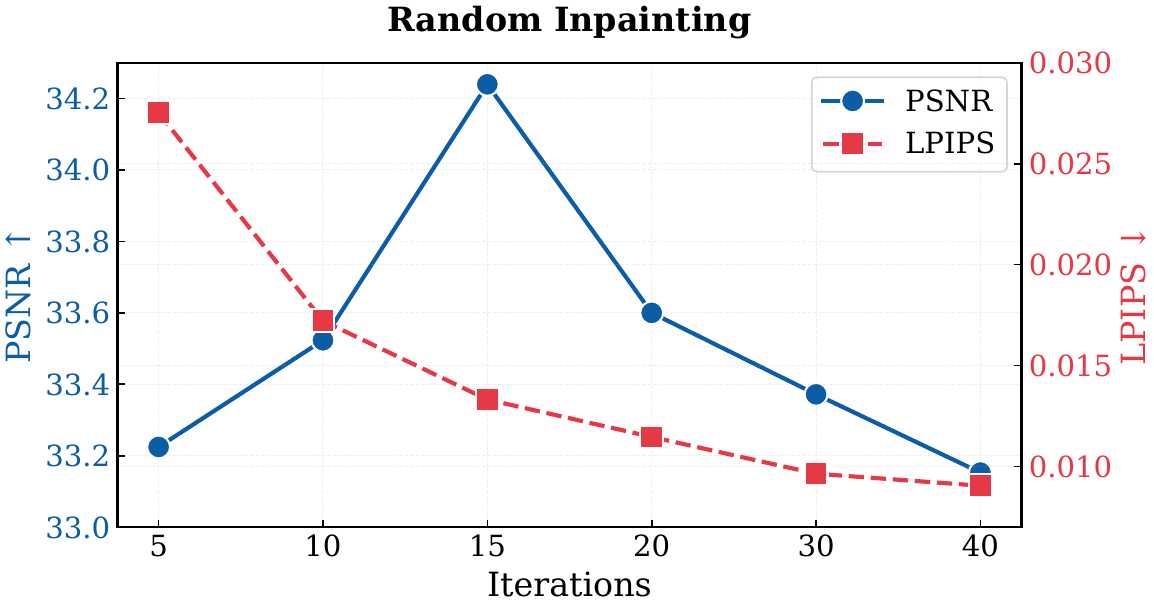}
        \hfill
        \includegraphics[width=0.49\linewidth]{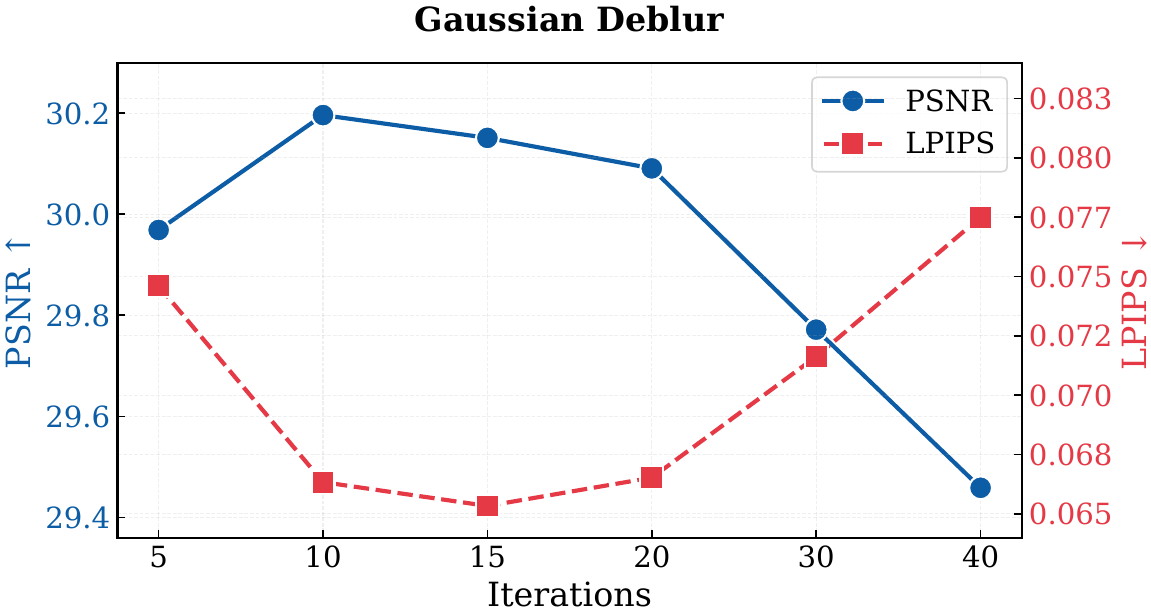}
        \vspace{-2pt}
        \caption*{(b) Effect of latent-space iterations $N_z$}
    \end{subfigure}

    \vspace{2pt}
    \caption{\textbf{Sensitivity to inner-loop iterations.} We evaluate the impact of (a) $N_x$ and (b) $N_z$ across different tasks.}
    \label{fig:ablation_Nx_Nz}
    \vspace{-0.2cm}
\end{figure}

\section{Conclusion}
\label{sec:conclusion}

We identified \emph{First-Order Manifold Blindness}---a fundamental geometric bottleneck caused by the rank deficiency of the decoder Jacobian---that limits latent-only inverse solvers from recovering high-frequency measurement residuals.
To resolve this, we proposed {HDPS}, a hybrid-domain framework that assigns measurement consistency to the pixel space and semantic prior modeling to the latent space, connected by a robust optimization-based latent alignment.
Extensive experiments demonstrate state-of-the-art performance across five linear inverse problems on three datasets, with gains exceeding 2\,dB over the strongest baselines.

\paragraph{Limitations and future work.}
The alternating pixel--latent optimization introduces computational overhead from the inner loops ($N_x$, $N_z$), though we show this cost is comparable to existing methods at matched quality.
The method's ceiling is ultimately bounded by the decoder's representational capacity.
Future directions include integrating faster pixel-space solvers (e.g., consistency models) and extending the decoupled paradigm to blind inverse problems where the forward operator $\mathcal{A}$ is unknown.

\begin{acks}
This work is supported by the National Major Scientific Instruments and Equipments Development Project of National Natural Science Foundation of China under Grant 62427820, the Science Fund for Creative Research Groups of Sichuan Province Natural Science Foundation under Grant 2024NSFTD0035, the Ministry of Education Engineering Research Center Guiding Project for Machine Learning and Industrial Intelligence Applications under Grant SCU2024D013.
\end{acks}

\bibliographystyle{ACM-Reference-Format}
\balance
\bibliography{ref}

@String{Computer = "{IEEE} Computer" }

@article{ho2020denoising,
  title={Denoising diffusion probabilistic models},
  author={Ho, Jonathan and Jain, Ajay and Abbeel, Pieter},
  journal={Advances in Neural Information Processing Systems},
  volume={33},
  pages={6840--6851},
  year={2020}
}

@inproceedings{
  song2021scorebased,
  title={Score-Based Generative Modeling through Stochastic Differential Equations},
  author={Yang Song and Jascha Sohl-Dickstein and Diederik P Kingma and Abhishek Kumar and Stefano Ermon and Ben Poole},
  booktitle={International Conference on Learning Representations},
  year={2021},
  url={https://openreview.net/forum?id=PxTIG12RRHS}
}

@inproceedings{rombach2022high,
  title={High-resolution image synthesis with latent diffusion models},
  author={Rombach, Robin and Blattmann, Andreas and Lorenz, Dominik and Esser, Patrick and Ommer, Bj{\"o}rn},
  booktitle={Proceedings of the IEEE/CVF Conference on Computer Vision and Pattern Recognition},
  pages={10684--10695},
  year={2022}
}

@inproceedings{
ho2021classifierfree,
title={Classifier-Free Diffusion Guidance},
author={Jonathan Ho and Tim Salimans},
booktitle={NeurIPS 2021 Workshop on Deep Generative Models and Downstream Applications},
year={2021},
url={https://openreview.net/forum?id=qw8AKxfYbI}
}

@inproceedings{
kawar2022denoising,
title={Denoising Diffusion Restoration Models},
author={Bahjat Kawar and Michael Elad and Stefano Ermon and Jiaming Song},
booktitle={Advances in Neural Information Processing Systems},
editor={Alice H. Oh and Alekh Agarwal and Danielle Belgrave and Kyunghyun Cho},
year={2022},
url={https://openreview.net/forum?id=kxXvopt9pWK}
}

@inproceedings{
chung2022improving,
title={Improving Diffusion Models for Inverse Problems using Manifold Constraints},
author={Hyungjin Chung and Byeongsu Sim and Dohoon Ryu and Jong Chul Ye},
booktitle={Advances in Neural Information Processing Systems},
editor={Alice H. Oh and Alekh Agarwal and Danielle Belgrave and Kyunghyun Cho},
year={2022},
url={https://openreview.net/forum?id=nJJjv0JDJju}
}

@inproceedings{
chung2023diffusion,
title={Diffusion Posterior Sampling for General Noisy Inverse Problems},
author={Hyungjin Chung and Jeongsol Kim and Michael Thompson Mccann and Marc Louis Klasky and Jong Chul Ye},
booktitle={International Conference on Learning Representations},
year={2023},
url={https://openreview.net/forum?id=OnD9zGAGT0k}
}

@inproceedings{zhang2018unreasonable,
  title={The unreasonable effectiveness of deep features as a perceptual metric},
  author={Zhang, Richard and Isola, Phillip and Efros, Alexei A and Shechtman, Eli and Wang, Oliver},
  booktitle={Proceedings of the IEEE conference on computer vision and pattern recognition},
  pages={586--595},
  year={2018}
}

@article{kingma2013auto,
  title={Auto-encoding variational bayes},
  author={Kingma, Diederik P and Welling, Max},
  journal={arXiv preprint arXiv:1312.6114},
  year={2013}
}

@inproceedings{
wang2023zeroshot,
title={Zero-Shot Image Restoration Using Denoising Diffusion Null-Space Model},
author={Yinhuai Wang and Jiwen Yu and Jian Zhang},
booktitle={The Eleventh International Conference on Learning Representations },
year={2023},
url={https://openreview.net/forum?id=mRieQgMtNTQ}
}

@inproceedings{
song2023pseudoinverseguided,
title={Pseudoinverse-Guided Diffusion Models for Inverse Problems},
author={Jiaming Song and Arash Vahdat and Morteza Mardani and Jan Kautz},
booktitle={International Conference on Learning Representations},
year={2023},
url={https://openreview.net/forum?id=9_gsMA8MRKQ}
}

@inproceedings{rout2023solving,
  title={Solving Linear Inverse Problems Provably via Posterior Sampling with Latent Diffusion Models},
  author={Rout, Litu and Raoof, Negin and Daras, Giannis and Caramanis, Constantine and Dimakis, Alex and Shakkottai, Sanjay},
  booktitle={Thirty-seventh Conference on Neural Information Processing Systems},
  year={2023}
}

@inproceedings{nichol2021improved,
  title={Improved denoising diffusion probabilistic models},
  author={Nichol, Alexander Quinn and Dhariwal, Prafulla},
  booktitle={International Conference on Machine Learning},
  pages={8162--8171},
  year={2021},
  organization={PMLR}
}

@inproceedings{song2023loss,
  title={Loss-guided diffusion models for plug-and-play controllable generation},
  author={Song, Jiaming and Zhang, Qinsheng and Yin, Hongxu and Mardani, Morteza and Liu, Ming-Yu and Kautz, Jan and Chen, Yongxin and Vahdat, Arash},
  booktitle={International Conference on Machine Learning},
  pages={32483--32498},
  year={2023},
  organization={PMLR}
}

@article{he2023iterative,
  title={Iterative reconstruction based on latent diffusion model for sparse data reconstruction},
  author={He, Linchao and Yan, Hongyu and Luo, Mengting and Luo, Kunming and Wang, Wang and Du, Wenchao and Chen, Hu and Yang, Hongyu and Zhang, Yi},
  journal={arXiv preprint arXiv:2307.12070},
  year={2023}
}

@inproceedings{zhu2023denoising,
  title={Denoising Diffusion Models for Plug-and-Play Image Restoration},
  author={Zhu, Yuanzhi and Zhang, Kai and Liang, Jingyun and Cao, Jiezhang and Wen, Bihan and Timofte, Radu and Van Gool, Luc},
  booktitle={Proceedings of the IEEE/CVF Conference on Computer Vision and Pattern Recognition},
  pages={1219--1229},
  year={2023}
}

@inproceedings{chen2021equivariant,
  title={Equivariant imaging: Learning beyond the range space},
  author={Chen, Dongdong and Tachella, Juli{\'a}n and Davies, Mike E},
  booktitle={Proceedings of the IEEE/CVF International Conference on Computer Vision},
  pages={4379--4388},
  year={2021}
}

@article{vahdat2021score,
  title={Score-based generative modeling in latent space},
  author={Vahdat, Arash and Kreis, Karsten and Kautz, Jan},
  journal={Advances in Neural Information Processing Systems},
  volume={34},
  pages={11287--11302},
  year={2021}
}

@inproceedings{
chung2024prompttuning,
title={Prompt-tuning Latent Diffusion Models for Inverse Problems},
author={Hyungjin Chung and Jong Chul Ye and Peyman Milanfar and Mauricio Delbracio},
booktitle={Forty-first International Conference on Machine Learning},
year={2024},
url={https://openreview.net/forum?id=hrwIndai8e}
}

@inproceedings{rout2024beyond,
  title={Beyond first-order tweedie: Solving inverse problems using latent diffusion},
  author={Rout, Litu and Chen, Yujia and Kumar, Abhishek and Caramanis, Constantine and Shakkottai, Sanjay and Chu, Wen-Sheng},
  booktitle={Proceedings of the IEEE/CVF Conference on Computer Vision and Pattern Recognition},
  pages={9472--9481},
  year={2024}
}

@inproceedings{
song2024resample,
title={Solving Inverse Problems with Latent Diffusion Models via Hard Data Consistency},
author={Bowen Song and Soo Min Kwon and Zecheng Zhang and Xinyu Hu and Qing Qu and Liyue Shen},
booktitle={The Twelfth International Conference on Learning Representations},
year={2024},
url={https://openreview.net/forum?id=j8hdRqOUhN}
}

@inproceedings{esser2024scaling,
  title={Scaling rectified flow transformers for high-resolution image synthesis},
  author={Esser, Patrick and Kulal, Sumith and Blattmann, Andreas and Entezari, Rahim and M{\"u}ller, Jonas and Saini, Harry and Levi, Yam and Lorenz, Dominik and Sauer, Axel and Boesel, Frederic and others},
  booktitle={Forty-first International Conference on Machine Learning},
  year={2024}
}

@inproceedings{
chung2024decomposed,
title={Decomposed Diffusion Sampler for Accelerating Large-Scale Inverse Problems},
author={Hyungjin Chung and Suhyeon Lee and Jong Chul Ye},
booktitle={The Twelfth International Conference on Learning Representations},
year={2024},
url={https://openreview.net/forum?id=DsEhqQtfAG}
}

@inproceedings{
liu2023flow,
title={Flow Straight and Fast: Learning to Generate and Transfer Data with Rectified Flow},
author={Xingchao Liu and Chengyue Gong and qiang liu},
booktitle={The Eleventh International Conference on Learning Representations },
year={2023},
url={https://openreview.net/forum?id=XVjTT1nw5z}
}

@inproceedings{wu2024diffusion,
  title={Diffusion Posterior Proximal Sampling for Image Restoration},
  author={Wu, Hongjie and He, Linchao and Zhang, Mingqin and Chen, Dongdong and Luo, Kunming and Luo, Mengting and Zhou, Ji-Zhe and Chen, Hu and Lv, Jiancheng},
  booktitle={Proceedings of the 32nd ACM International Conference on Multimedia},
  pages={214--223},
  year={2024}
}

@inproceedings{karras2019style,
  title={A style-based generator architecture for generative adversarial networks},
  author={Karras, Tero and Laine, Samuli and Aila, Timo},
  booktitle={Proceedings of the IEEE/CVF conference on computer vision and pattern recognition},
  pages={4401--4410},
  year={2019}
}

@inproceedings{zhang2025improving,
  title={Improving diffusion inverse problem solving with decoupled noise annealing},
  author={Zhang, Bingliang and Chu, Wenda and Berner, Julius and Meng, Chenlin and Anandkumar, Anima and Song, Yang},
  booktitle={Proceedings of the Computer Vision and Pattern Recognition Conference},
  pages={20895--20905},
  year={2025}
}

@inproceedings{
lipman2023flow,
title={Flow Matching for Generative Modeling},
author={Yaron Lipman and Ricky T. Q. Chen and Heli Ben-Hamu and Maximilian Nickel and Matthew Le},
booktitle={The Eleventh International Conference on Learning Representations },
year={2023},
url={https://openreview.net/forum?id=PqvMRDCJT9t}
}

@article{wu2024principled,
  title={Principled Probabilistic Imaging using Diffusion Models as Plug-and-Play Priors},
  author={Wu, Zihui and Sun, Yu and Chen, Yifan and Zhang, Bingliang and Yue, Yisong and Bouman, Katherine L},
  journal={arXiv e-prints},
  pages={arXiv--2405},
  year={2024}
}

@article{ho2022video,
  title={Video diffusion models},
  author={Ho, Jonathan and Salimans, Tim and Gritsenko, Alexey and Chan, William and Norouzi, Mohammad and Fleet, David J},
  journal={Advances in Neural Information Processing Systems},
  volume={35},
  pages={8633--8646},
  year={2022}
}

@inproceedings{wu2025enhancing,
  title={Enhancing Diffusion Model Stability for Image Restoration via Gradient Management},
  author={Wu, Hongjie and Zhang, Mingqin and He, Linchao and Zhou, Ji-Zhe and Lv, Jiancheng},
  booktitle={Proceedings of the 33rd ACM International Conference on Multimedia},
  pages={10768--10777},
  year={2025}
}

@article{sd35,
  title={SD3. 5-Flash: Distribution-Guided Distillation of Generative Flows},
  author={Bandyopadhyay, Hmrishav and Entezari, Rahim and Scott, Jim and Adithyan, Reshinth and Song, Yi-Zhe and Jampani, Varun},
  journal={arXiv preprint arXiv:2509.21318},
  year={2025}
}

@article{labs2025flux,
  title={FLUX. 1 Kontext: Flow Matching for In-Context Image Generation and Editing in Latent Space},
  author={Labs, Black Forest and Batifol, Stephen and Blattmann, Andreas and Boesel, Frederic and Consul, Saksham and Diagne, Cyril and Dockhorn, Tim and English, Jack and English, Zion and Esser, Patrick and others},
  journal={arXiv preprint arXiv:2506.15742},
  year={2025}
}

@inproceedings{kim2025flowdps,
  title={Flowdps: Flow-driven posterior sampling for inverse problems},
  author={Kim, Jeongsol and Kim, Bryan Sangwoo and Ye, Jong Chul},
  booktitle={Proceedings of the IEEE/CVF International Conference on Computer Vision},
  pages={12328--12337},
  year={2025}
}

@article{park2025flowlps,
  title={FlowLPS: Langevin-Proximal Sampling for Flow-based Inverse Problem Solvers},
  author={Park, Jonghyun and Ye, Jong Chul},
  journal={arXiv preprint arXiv:2512.07150},
  year={2025}
}

@article{erbach2025FLAIR,
  title={Solving Inverse Problems with FLAIR},
  author={Erbach, Julius and Narnhofer, Dominik and Dombos, Andreas and Schiele, Bernt and Lenssen, Jan Eric and Schindler, Konrad},
  journal={arXiv preprint arXiv:2506.02680},
  year={2025}
}

@inproceedings{
martin2025pnpflow,
title={PnP-Flow: Plug-and-Play Image Restoration with Flow Matching},
author={S{\'e}gol{\`e}ne Tiffany Martin and Anne Gagneux and Paul Hagemann and Gabriele Steidl},
booktitle={The Thirteenth International Conference on Learning Representations},
year={2025},
url={https://openreview.net/forum?id=5AtHrq3B5R}
}

@article{pourya2025flower,
  title={FLOWER: A Flow-Matching Solver for Inverse Problems},
  author={Pourya, Mehrsa and Rawas, Bassam El and Unser, Michael},
  journal={arXiv preprint arXiv:2509.26287},
  year={2025}
}

@article{askari2025latent,
  title={Latent Refinement via Flow Matching for Training-free Linear Inverse Problem Solving},
  author={Askari, Hossein and Luo, Yadan and Sun, Hongfu and Roosta, Fred},
  journal={arXiv preprint arXiv:2511.06138},
  year={2025}
}

@inproceedings{yan2025fig,
  title={Fig: Flow with interpolant guidance for linear inverse problems},
  author={Yan, Yici and Zhang, Yichi and Meng, Xiangming and Zhao, Zhizhen},
  booktitle={The Thirteenth International Conference on Learning Representations},
  year={2025}
}

@article{zhang2024flow,
  title={Flow priors for linear inverse problems via iterative corrupted trajectory matching},
  author={Zhang, Yasi and Yu, Peiyu and Zhu, Yaxuan and Chang, Yingshan and Gao, Feng and Wu, Ying N and Leong, Oscar},
  journal={Advances in Neural Information Processing Systems},
  volume={37},
  pages={57389--57417},
  year={2024}
}

@article{ben2024d,
  title={D-flow: Differentiating through flows for controlled generation},
  author={Ben-Hamu, Heli and Puny, Omri and Gat, Itai and Karrer, Brian and Singer, Uriel and Lipman, Yaron},
  journal={arXiv preprint arXiv:2402.14017},
  year={2024}
}

@inproceedings{zhang2023adding,
  title={Adding conditional control to text-to-image diffusion models},
  author={Zhang, Lvmin and Rao, Anyi and Agrawala, Maneesh},
  booktitle={Proceedings of the IEEE/CVF international conference on computer vision},
  pages={3836--3847},
  year={2023}
}

@inproceedings{
zilberstein2025repulsive,
title={Repulsive Latent Score Distillation for Solving Inverse Problems},
author={Nicolas Zilberstein and Morteza Mardani and Santiago Segarra},
booktitle={The Thirteenth International Conference on Learning Representations},
year={2025},
url={https://openreview.net/forum?id=bwJxUB0y46}
}

@inproceedings{zhang2025decoupling,
  title={Decoupling training-free guided diffusion by admm},
  author={Zhang, Youyuan and Liu, Zehua and Li, Zenan and Li, Zhaoyu and Clark, James J and Si, Xujie},
  booktitle={Proceedings of the Computer Vision and Pattern Recognition Conference},
  pages={23292--23302},
  year={2025}
}

@inproceedings{
patel2025flowchef,
title={Steering Rectified Flow Models in the Vector Field for Controlled Image Generation},
author={Maitreya Patel and Song Wen and Dimitris N. Metaxas and Yezhou Yang},
booktitle={Frontiers in Probabilistic Inference: Learning meets Sampling},
year={2025},
url={https://openreview.net/forum?id=p3DUUpd2No}
}

@book{tarantola2005inverse,
  title={Inverse problem theory and methods for model parameter estimation},
  author={Tarantola, Albert},
  year={2005},
  publisher={SIAM}
}

@inproceedings{choi2020stargan,
  title={Stargan v2: Diverse image synthesis for multiple domains},
  author={Choi, Yunjey and Uh, Youngjung and Yoo, Jaejun and Ha, Jung-Woo},
  booktitle={Proceedings of the IEEE/CVF conference on computer vision and pattern recognition},
  pages={8188--8197},
  year={2020}
}

@inproceedings{agustsson2017ntire,
  title={Ntire 2017 challenge on single image super-resolution: Dataset and study},
  author={Agustsson, Eirikur and Timofte, Radu},
  booktitle={Proceedings of the IEEE conference on computer vision and pattern recognition workshops},
  pages={126--135},
  year={2017}
}

@inproceedings{wu2024seesr,
  title={Seesr: Towards semantics-aware real-world image super-resolution},
  author={Wu, Rongyuan and Yang, Tao and Sun, Lingchen and Zhang, Zhengqiang and Li, Shuai and Zhang, Lei},
  booktitle={Proceedings of the IEEE/CVF conference on computer vision and pattern recognition},
  pages={25456--25467},
  year={2024}
}

@article{wang2004image,
  title={Image quality assessment: from error visibility to structural similarity},
  author={Wang, Zhou and Bovik, Alan C and Sheikh, Hamid R and Simoncelli, Eero P},
  journal={IEEE transactions on image processing},
  volume={13},
  number={4},
  pages={600--612},
  year={2004},
  publisher={IEEE}
}

\clearpage

\appendix

\section{Proof of Theorem~\ref{thm:resolution}}
\label{app:proof}

We analyze the optimal projection $\hat{\bm{z}}_{0|t}(\eta_x)$ via Taylor expansion around $\eta_x = 0$.
Write $\hat{\bm{z}}_{0|t} = \bm{z}_{0|t} + \eta_x \bm{v} + \frac{1}{2}\eta_x^2 \bm{w} + \mathcal{O}(\eta_x^3)$.
The decoded image expands as:
\begin{equation}
    D(\hat{\bm{z}}_{0|t}) = \bm{x}_{0|t} + \eta_x J_D \bm{v} + \tfrac{1}{2}\eta_x^2 (J_D \bm{w} + \bm{v}^\top \mathcal{H}_D \bm{v}) + \mathcal{O}(\eta_x^3),
\end{equation}
where $J_D$ and $\mathcal{H}_D$ are evaluated at $\bm{z}_{0|t}$.

The first-order optimality condition for the projection $\hat{\bm{z}}_{0|t} = \argmin_{\bm{z}} \|\hat{\bm{x}}_{0|t} - D(\bm{z})\|^2$ is:
\begin{equation}
    J_D(\hat{\bm{z}}_{0|t})^\top \big(D(\hat{\bm{z}}_{0|t}) - \hat{\bm{x}}_{0|t}\big) = \bm{0}.
\end{equation}

\paragraph{First-order matching ($\mathcal{O}(\eta_x)$).}
Substituting $\hat{\bm{x}}_{0|t} = \bm{x}_{0|t} - \eta_x \bm{g}$ and matching first-order terms yields:
\begin{equation}
    J_D^\top (J_D \bm{v} + \bm{g}) = \bm{0}.
\end{equation}
Since $\bm{g} = \bm{g}_{\parallel} + \bm{g}_{\perp}$ and $J_D^\top \bm{g}_{\perp} = \bm{0}$ (by orthogonality), this reduces to:
\begin{equation}
    J_D^\top J_D \bm{v} = -J_D^\top \bm{g}_{\parallel},
\end{equation}
giving $\bm{v} = -(J_D^\top J_D)^{\dagger} J_D^\top \bm{g}_{\parallel}$, where $\dagger$ denotes the Moore-Penrose pseudoinverse.

\paragraph{Projection onto $\bm{g}_{\perp}$.}
The effective image-space update is $\Delta \bm{x} = D(\hat{\bm{z}}_{0|t}) - \bm{x}_{0|t}$.
Projecting onto $\bm{g}_{\perp}$:
\begin{align}
    \langle \Delta \bm{x}, \bm{g}_{\perp} \rangle 
    &= \eta_x \underbrace{\langle J_D \bm{v}, \bm{g}_{\perp} \rangle}_{=\,0} \nonumber \\
    &\quad + \tfrac{1}{2}\eta_x^2 \Big( \underbrace{\langle J_D \bm{w}, \bm{g}_{\perp} \rangle}_{=\,0} + \langle \bm{v}^\top \mathcal{H}_D \bm{v}, \bm{g}_{\perp} \rangle \Big) + \mathcal{O}(\eta_x^3).
\end{align}
The terms $\langle J_D \bm{v}, \bm{g}_{\perp} \rangle$ and $\langle J_D \bm{w}, \bm{g}_{\perp} \rangle$ both vanish because $J_D \bm{v}$ and $J_D \bm{w}$ lie in $\mathcal{R}(J_D)$, while $\bm{g}_{\perp} \in \mathcal{R}(J_D)^\perp$.
The surviving term is:
\begin{equation}
    \langle \Delta \bm{x}, \bm{g}_{\perp} \rangle = \tfrac{1}{2}\eta_x^2 \, \bm{v}^\top \big(\bm{g}_{\perp}^\top \mathcal{H}_D\big) \bm{v} + \mathcal{O}(\eta_x^3).
\end{equation}

This is non-zero whenever $\bm{g}_{\perp}^\top \mathcal{H}_D \neq \bm{0}$, i.e., whenever the decoder has non-trivial curvature along directions correlated with the orthogonal residual.
In contrast, pure latent optimization yields $\langle \Delta \bm{x}, \bm{g}_{\perp} \rangle = 0$ at \emph{all} orders of $\eta$, since latent updates are confined to $\mathcal{R}(J_D)$ by construction.
\hfill $\square$

\section{Implementation Details}
\label{app:details}

\subsection{HDPS Configurations}

\paragraph{Hyperparameters.}
Table~\ref{tab:hyperparams} lists the default hyperparameters used across all experiments unless otherwise noted.

\begin{table}[h]
  \centering
  \caption{Default hyperparameters for HDPS.}
  \label{tab:hyperparams}
  \begin{tabular}{lcc}
    \toprule
    Parameter & Symbol & Value \\
    \midrule
    Number of flow steps (NFE) & -- & 50 \\
    Initialization time & $t_0$ & 0.8 \\
    Langevin steps per flow step & $N_x$ & 20 \\
    Alignment steps per flow step & $N_z$ & 15 \\
    CFG scale & $\lambda$ & 2.0 \\
    Measurement noise level & $\sigma_n$ & 0.03 \\
    \bottomrule
  \end{tabular}
\end{table}

\paragraph{Text prompts.}
For classifier-free guidance conditioning, we use the following dataset-specific prompts:
\begin{itemize}
    \item \textbf{FFHQ}: ``a photo of a closed face''
    \item \textbf{AFHQ}: ``a photo of a closed face of a dog'' or ``a photo of a closed face of a cat''
    \item \textbf{DIV2K}: automated descriptions generated by DAPE~\cite{wu2024seesr}
\end{itemize}








\subsection{Comparison Methods}
To ensure a fair comparison, all baseline methods are implemented using the same pre-trained \textbf{Stable~Diffusion 3.0}~\cite{esser2024scaling} backbone with the guidance scale fixed at 2.0.

\textbf{LatentDAPS \cite{zhang2025improving}:} We use 50 NFEs and the hyperparameter $\beta_y$ is set to $ 1 \times 10^{-4}$. The Langevin Dynamics stage consists of $N=15$ steps with a step size of $\eta = 1 \times 10^{-4}$.

\textbf{ReSample \cite{song2024resample}:} We fix the NFE at 50 with a skip step size of 1, and the total optimization steps at $N=30$. The resampling hyper-parameter $\gamma$ and step size $\eta$ are task-specific: $\{ \gamma=0.5, \eta=10 \}$ for super-resolution and $\{ \gamma=1, \eta=1 \}$ for other tasks.

\textbf{FlowChef \cite{patel2025flowchef}:} The generation process uses 100 NFEs with a constant step size of 0.5 for data consistency.

\textbf{FlowDPS \cite{kim2025flowdps}:} Following its original configuration, we set the NFE to 50 and perform 3 gradient descent steps for data consistency with a step size of 15 across all tasks.

\textbf{FLAIR \cite{erbach2025FLAIR}:} We adopt the regularization weights proposed in the original work. For data consistency, 15 gradient descent steps are employed, with the step size $\eta$ configured as 12 for super-resolution and 0.1 for other tasks.

\section{Additional Experimental Results}
\label{app:experimental}

\subsection{Detailed Runtime Analysis}
\label{app:runtime_analysis}

Table~\ref{tab:time_analysis} provides a detailed breakdown of the time consumed by different optimization components during the ODE integration process. 
We compare the computational cost of the composite gradient step in FlowDPS against the decoupled operations in HDPS.
As shown, FlowDPS spends more time evaluating gradients through the heavy composite operator $\mathcal{A} \circ D$. In contrast, HDPS breaks the process down into two independent and specialized operations: an inexpensive $x$-optimization step (requiring only forward/adjoint evaluations) and a $z$-optimization step (requiring only decoder backward passes). Crucially, the total combined time of these two decoupled steps in HDPS ($x$+$z$ optimization) remains lower than the single composite step in FlowDPS across all iteration budgets ($N \in \{1, 3, 15\}$). This confirms that decoupling fundamentally improves computational scaling by bypassing deep composite back-propagation chains.

\begin{table}[htbp]
\centering
\caption{Runtime of different optimization items with NFE=50.}
\label{tab:time_analysis}
\begin{tabular}{lccc}
\toprule
Optimization Items & 1  & 3  & 15  \\ \midrule
FlowDPS, composite $\mathcal{A}\circ D$ (s) & 0.8472 & 2.4127 & 12.6074 \\
HDPS, $z$ optimization (s)    & 0.6824 & 2.0001 & 10.5765 \\
HDPS, $x$ optimization (s)    & 0.1147 & 0.2426 & 1.1148 \\
\rowcolor{Gray}
HDPS, $z$ + $x$ optimization (s) & 0.7900 & 2.2279 & 11.6242 \\  \bottomrule
\end{tabular}
\end{table}

\begin{figure}[t]
    \centering
    \includegraphics[width=\linewidth]{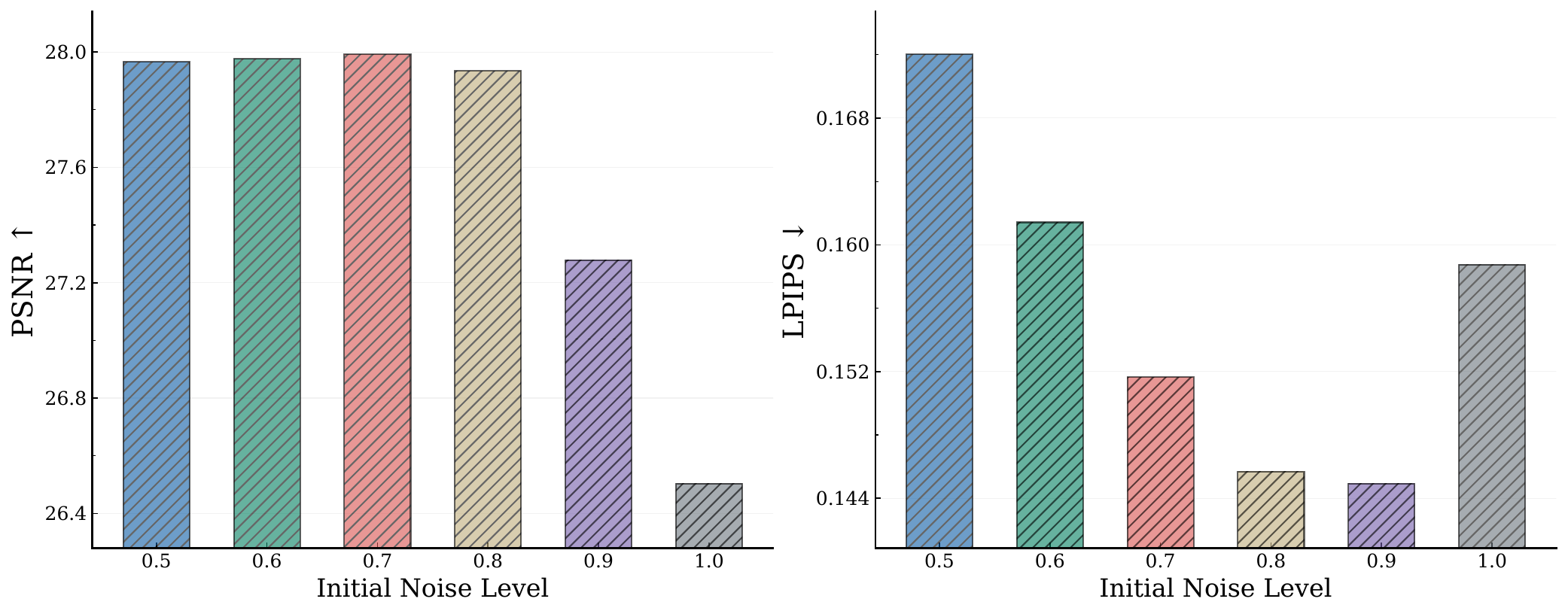}
    \caption{Sensitivity to initialization time $t_0$. $t_0=0.8$ achieves the best distortion--perception trade-off.}
    \label{fig:ablation-init}
\end{figure}

\subsection{Convergence Analysis}
\label{sec:exp_convergence}

To examine the optimization landscape, we visualize the evolution of the measurement consistency loss throughout the sampling trajectory in Figure~\ref{fig:ablation_loss}. We track both the latent-decoded error $\|\bm{y} - \mathcal{A}(D(\bm{z}_{0|t}))\|_2$ and the aligned error $\|\bm{y} - \mathcal{A}(\hat{\bm{x}}_{0|t})\|_2$.

\paragraph{Latent vs. Hybrid.}
The \textit{Latent-Only} baseline stagnates early at a high residual error, confirming that gradient updates vanish for high-frequency components due to manifold blindness.
Interestingly, the \textit{Pixel + Encoder} strategy exhibits non-monotonic behavior: in the late stages of generation, the consistency loss often increases.
This occurs because the encoder $E$, trained on clean natural images, aggressively projects the refined image $\hat{\bm{x}}$ back to the prior manifold, often undoing the subtle high-frequency corrections required to satisfy the noisy measurements.
In contrast, our proposed \textit{Pixel + Alignment} strategy continues to minimize the error monotonically, achieving a final residual orders of magnitude lower than the baselines.

\paragraph{Impact of Initialization.}
Furthermore, comparing our method with and without warm-start reveals that proper initialization significantly accelerates convergence.
The warm-start trajectory stabilizes the early Langevin dynamics, avoiding the initial chaotic search phase. This prevents early stagnation and leads to a deeper final minimum.
This validates that while the hybrid alignment mechanism is the primary driver of performance, the warm-start acts as a crucial catalyst for optimization efficiency.

\subsection{Initialization Parameter Sweep}
\label{app:init_sweep}

To systematically determine the optimal operating point for our warm-start initialization, we performed a parameter sweep varying $t_0 \in \{1.0, 0.9, \dots, 0.5\}$. The quantitative results of this sweep are presented in Figure~\ref{fig:ablation-init}.
The trends demonstrate that $t_0=0.8$ achieves the optimal trade-off between distortion (measured by PSNR) and perceptual quality (measured by LPIPS).
Earlier start times ($t_0 > 0.8$) operate in a regime where the latent code lacks structural coherence, cause the pixel-space updates to introduce unnecessary stochastic variance and artifacts. Conversely, later start times ($t_0 < 0.6$) overly constrain the generative diversity by enforcing measurements too late in the reverse process, leading to sub-optimal perceptual quality. Consequently, we adopt $t_0=0.8$ as the default and most balanced setting for HDPS.

\begin{figure}[t]
    \centering
    \includegraphics[width=\linewidth]{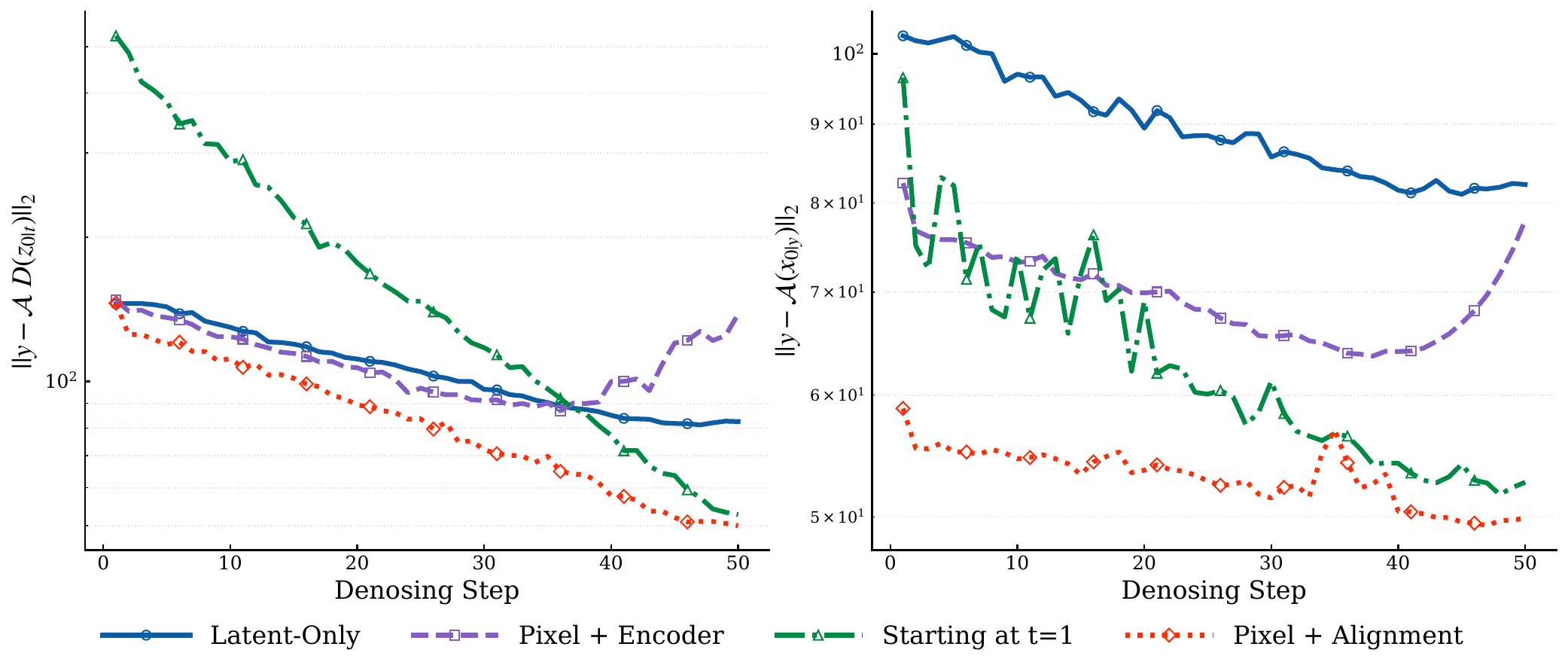}
    \caption{Convergence of measurement consistency loss. Latent-Only stagnates; Pixel + Encoder is non-monotonic; our method converges monotonically to the lowest residual.}
    \label{fig:ablation_loss}
\end{figure}

\section{Further Analysis}
\label{app:further_analysis}

\subsection{Comparison with Pixel-Space Samplers}

We complement the high-resolution latent-flow comparisons with DPS~\cite{chung2023diffusion} and DAPS~\cite{zhang2025improving}, two strong posterior samplers equipped with pixel-space diffusion priors on a subset of FFHQ. Because these pixel-space priors and our latent-flow backbone belong to different model families, Table~\ref{tab:pixel_baselines} is intended as a controlled low-resolution comparison rather than a claim of identical prior capacity. At $256\times256$, HDPS obtains the best PSNR and SSIM for random inpainting, while DAPS obtains the best LPIPS. For motion deblurring, DAPS leads in PSNR and LPIPS, whereas HDPS achieves the best SSIM. These results show that HDPS remains competitive with dedicated pixel-space samplers while retaining the latent generative prior used in our $768\times768$ experiments. Together with the LatentDAPS comparison in Table~\ref{tab:quantitative}, they also indicate that the gain does not arise from the Langevin update alone, but from coupling pixel correction to latent flow evolution through alignment.

\begin{table}[t]
\centering
\caption{Comparison with pixel-space posterior samplers on FFHQ at $256\times256$ with measurement noise $\sigma_n=0.05$. Best results are shown in bold.}
\label{tab:pixel_baselines}
\resizebox{\linewidth}{!}{
\begin{tabular}{lcccccc}
\toprule
& \multicolumn{3}{c}{Random Inpainting} & \multicolumn{3}{c}{Motion Deblurring} \\
\cmidrule(r){2-4} \cmidrule(l){5-7}
Method & PSNR$\uparrow$ & SSIM$\uparrow$ & LPIPS$\downarrow$ & PSNR$\uparrow$ & SSIM$\uparrow$ & LPIPS$\downarrow$ \\
\midrule
DPS  & 28.92 & 0.840 & 0.162 & 27.23 & 0.780 & 0.181 \\
DAPS & 30.13 & 0.799 & \textbf{0.108} & \textbf{31.15} & 0.838 & \textbf{0.103} \\
HDPS & \textbf{30.65} & \textbf{0.869} & 0.131 & 30.37 & \textbf{0.856} & 0.140 \\
\bottomrule
\end{tabular}}
\end{table}

\subsection{Explanation about Anchoring and Early Stopping}

The finite alignment can be understood as an anchored compromise between the clean flow prediction $\bm{z}_{0|t}$ and the measurement-consistent, but potentially artifact-contaminated, target $\hat{\bm{x}}_{0|t}$. Initializing at $\bm{z}_{0|t}$ and stopping after a small number of iterations implicitly regularize the solution toward the flow trajectory: the early updates absorb dominant structural corrections, whereas excessive optimization can move the latent farther from its anchor and fit Langevin artifacts. Thus, the decoder's eventual ability to fit an artifact concerns representational capacity and does not contradict the first-order accessibility bottleneck motivating the pixel correction. 



\subsection{Posterior Diversity}

HDPS retains stochasticity at two points: the Langevin noise $\bm{\xi}_j$ in Eq.~\eqref{eq:langevin} explores measurement-consistent pixel corrections, and the independent perturbation $\bm{\epsilon}'$ in Eq.~\eqref{eq:state_update} injects stochasticity when returning to the latent trajectory. The deterministic alignment can contract some of this variation, while anchoring and early stopping prevent unrestricted drift. Consequently, $N_z$ and the state-update noise schedule jointly mediate the fidelity--diversity trade-off. Our present evaluation focuses on reconstruction fidelity and perceptual quality; a full characterization of conditional uncertainty, including multi-sample pairwise diversity and calibrated uncertainty maps, remains an important direction for future work.

\subsection{Extension to Nonlinear and Real-World Degradations}

The hybrid construction is not restricted in form to a linear operator. For a known differentiable nonlinear forward model $\mathcal{A}$ with Gaussian measurement noise, the likelihood term in the pixel Langevin step becomes
\begin{equation}
  \nabla_{\bm{x}}\log p(\bm{y}\mid\bm{x})
  = \frac{J_{\mathcal{A}}(\bm{x})^\top\big(\bm{y}-\mathcal{A}(\bm{x})\big)}{\sigma_y^2},
\end{equation}
which can replace the linear adjoint term in Eq.~\eqref{eq:langevin} without changing the latent alignment or flow update. If the physical operator is non-differentiable but paired data or a simulator is available, a differentiable neural surrogate $\widetilde{\mathcal{A}}$ can provide an approximate likelihood gradient, with reconstruction quality depending on the surrogate's accuracy. Blind and real-world restoration are more demanding: the operator, its parameters, and the noise distribution may all be unknown, so differentiability alone is insufficient. Extending HDPS to these settings would require joint or alternating estimation of the degradation model and the image, as well as robustness to operator and noise mismatch; we regard this as a promising extension rather than an empirically established capability of the current model.

\section{Additional Qualitative Results}
\label{app:qualitative}


In this section, we provide supplementary qualitative results to further substantiate the effectiveness of our proposed method. 
\Cref{fig:more_mc_1,fig:more_mc_2,fig:more_mc_3,fig:more_mc_4,fig:more_mc_5} 
display additional visual comparisons on random inpainting, Gaussian deblurring, motion deblurring, 
$\times 12$ bicubic super-resolution, and $\times 12$ average pooling super-resolution, respectively. 
These extensive results further demonstrate the superior generative capability of our approach 
in producing high-fidelity textures while strictly adhering to data consistency across various 
degradation types.

\begin{figure*}[ht]
    \centering
    \begin{tikzpicture}
        \node[anchor=south west, inner sep=0] (image) at (0,0) {
            \includegraphics[width=0.9\linewidth]{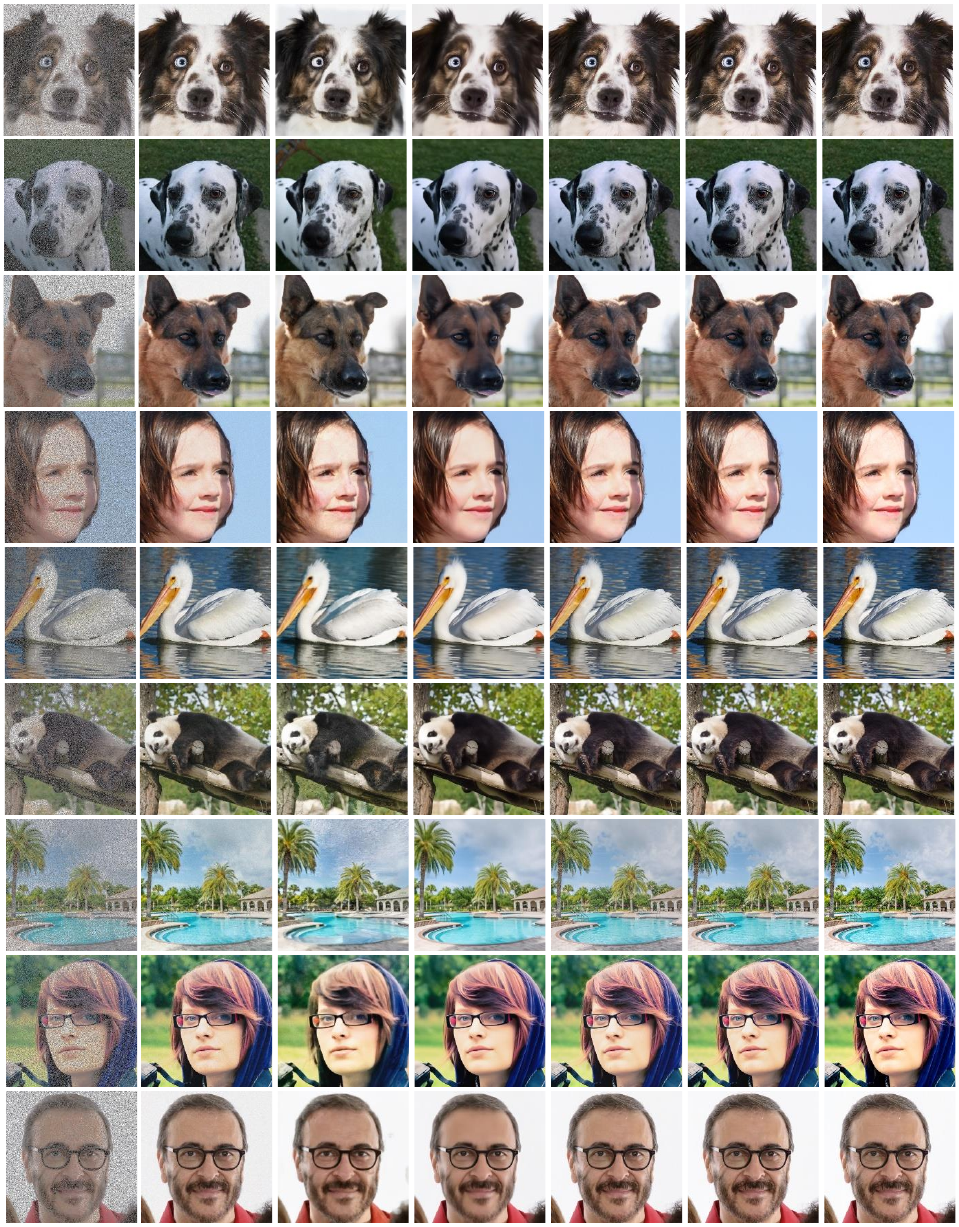}
        };
        \begin{scope}[x={(image.south east)},y={(image.north west)}]
            \node at (0.075, 1.01) {\small Input};
            \node at (0.21, 1.01)  {\small Resample};
            \node at (0.355, 1.01) {\small FlowChef};
            \node at (0.495, 1.01) {\small FlowDPS};
            \node at (0.64, 1.01)  {\small FLAIR};
            \node at (0.78, 1.01)  {\small \textbf{Ours}};
            \node at (0.92, 1.01)  {\small Reference};
        \end{scope}
    \end{tikzpicture}
    \caption{Additional Qualitative results to illustrate the effectiveness of our proposed method on random inpainting task with $\sigma_y=0.03$.} 
    \label{fig:more_mc_1} 
\end{figure*}

\begin{figure*}[ht]
    \centering
    \begin{tikzpicture}
        \node[anchor=south west, inner sep=0] (image) at (0,0) {
            \includegraphics[width=0.9\linewidth]{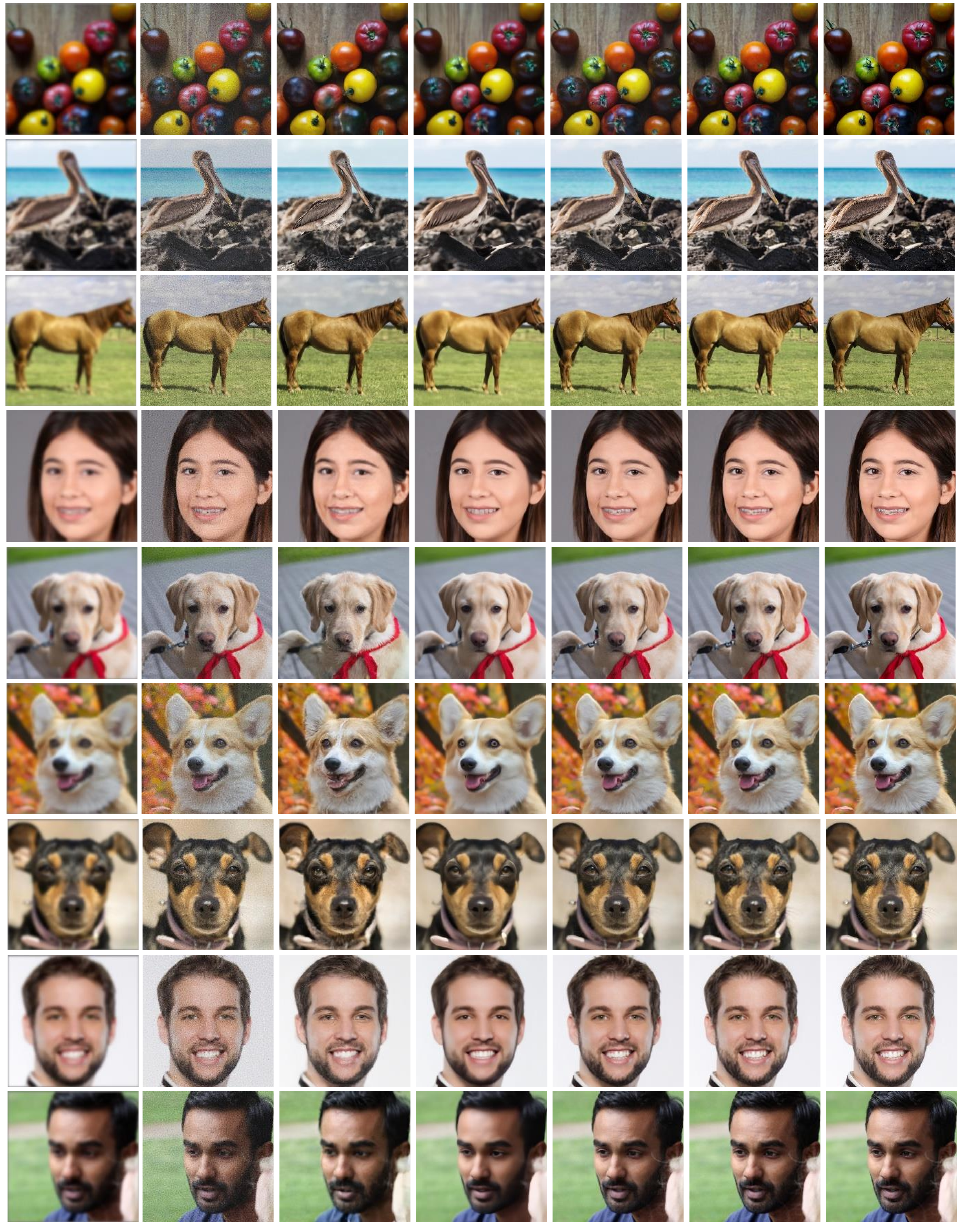}
        };
        \begin{scope}[x={(image.south east)},y={(image.north west)}]
            \node at (0.075, 1.01) {\small Input};
            \node at (0.21, 1.01)  {\small Resample};
            \node at (0.355, 1.01) {\small FlowChef};
            \node at (0.495, 1.01) {\small FlowDPS};
            \node at (0.64, 1.01)  {\small FLAIR};
            \node at (0.78, 1.01)  {\small \textbf{Ours}};
            \node at (0.92, 1.01)  {\small Reference};
        \end{scope}
    \end{tikzpicture}
    \caption{Additional Qualitative results to illustrate the effectiveness of our proposed method on Gaussian deblurring task with $\sigma_y=0.03$.} 
    \label{fig:more_mc_2} 
\end{figure*}

\begin{figure*}[ht]
    \centering
    \begin{tikzpicture}
        \node[anchor=south west, inner sep=0] (image) at (0,0) {
            \includegraphics[width=0.9\linewidth]{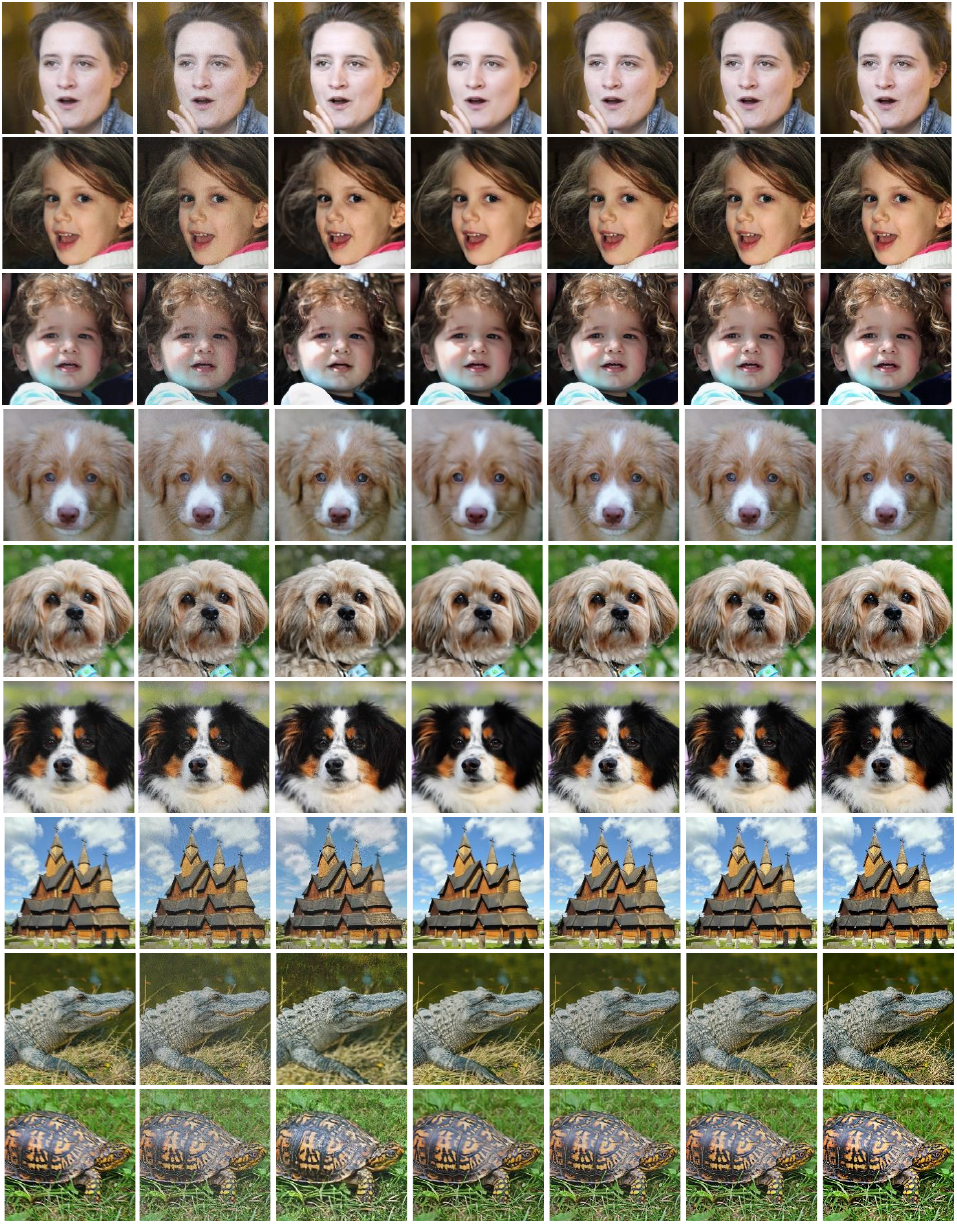}
        };
        \begin{scope}[x={(image.south east)},y={(image.north west)}]
            \node at (0.075, 1.01) {\small Input};
            \node at (0.21, 1.01)  {\small Resample};
            \node at (0.355, 1.01) {\small FlowChef};
            \node at (0.495, 1.01) {\small FlowDPS};
            \node at (0.64, 1.01)  {\small FLAIR};
            \node at (0.78, 1.01)  {\small \textbf{Ours}};
            \node at (0.92, 1.01)  {\small Reference};
        \end{scope}
    \end{tikzpicture}
    \caption{Additional Qualitative results to illustrate the effectiveness of our proposed method on motion deblurring task with $\sigma_y=0.03$.} 
    \label{fig:more_mc_3} 
\end{figure*}

\begin{figure*}[ht]
    \centering
    \begin{tikzpicture}
        \node[anchor=south west, inner sep=0] (image) at (0,0) {
            \includegraphics[width=0.9\linewidth]{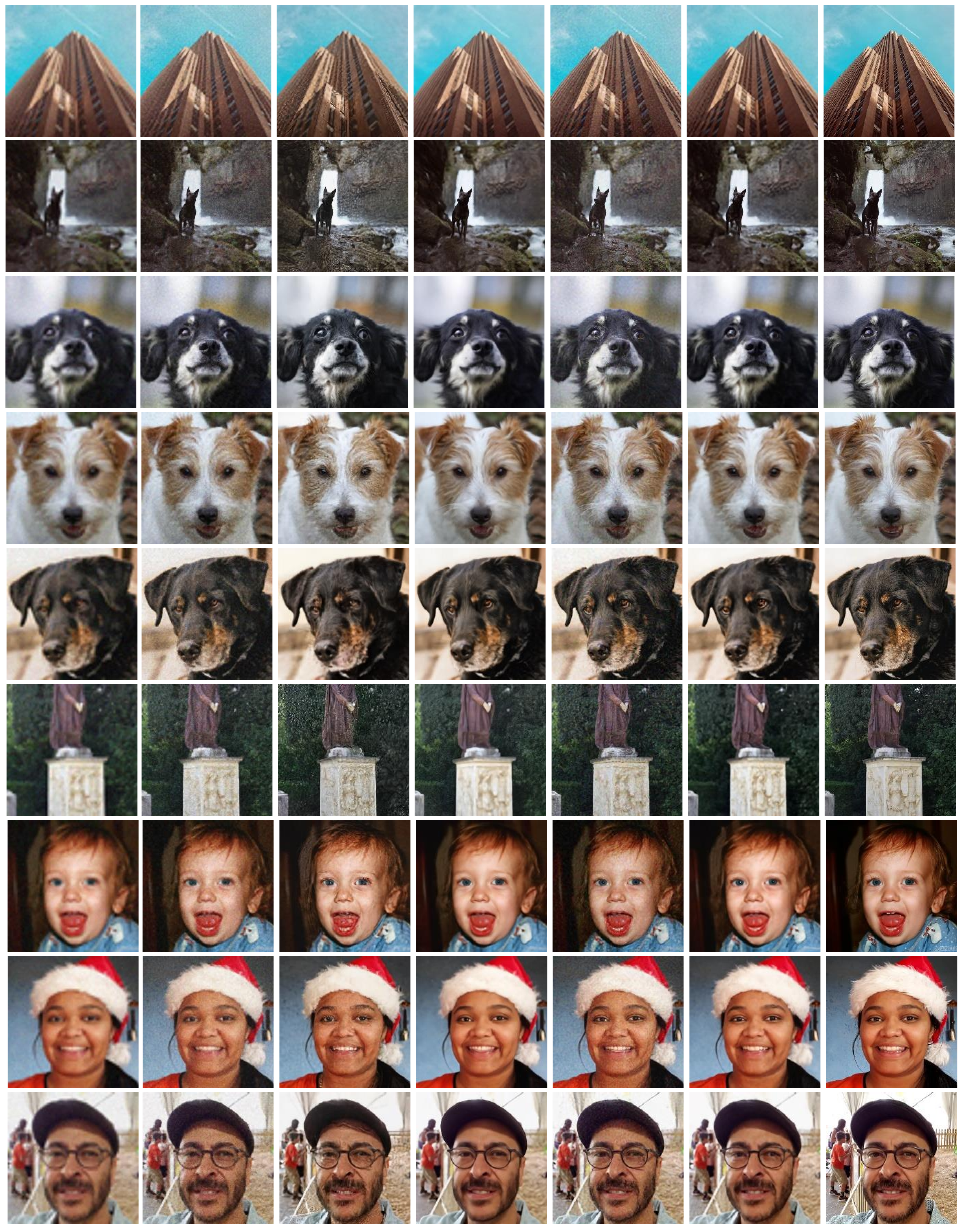}
        };
        \begin{scope}[x={(image.south east)},y={(image.north west)}]
            \node at (0.075, 1.01) {\small Input};
            \node at (0.21, 1.01)  {\small Resample};
            \node at (0.355, 1.01) {\small FlowChef};
            \node at (0.495, 1.01) {\small FlowDPS};
            \node at (0.64, 1.01)  {\small FLAIR};
            \node at (0.78, 1.01)  {\small \textbf{Ours}};
            \node at (0.92, 1.01)  {\small Reference};
        \end{scope}
    \end{tikzpicture}
    \caption{Additional Qualitative results to illustrate the effectiveness of our proposed method on $\times 12$ super-resolution task from bicubic downsampling with $\sigma_y=0.03$.} 
    \label{fig:more_mc_4} 
\end{figure*}

\begin{figure*}[ht]
    \centering
    \begin{tikzpicture}
        \node[anchor=south west, inner sep=0] (image) at (0,0) {
            \includegraphics[width=0.9\linewidth]{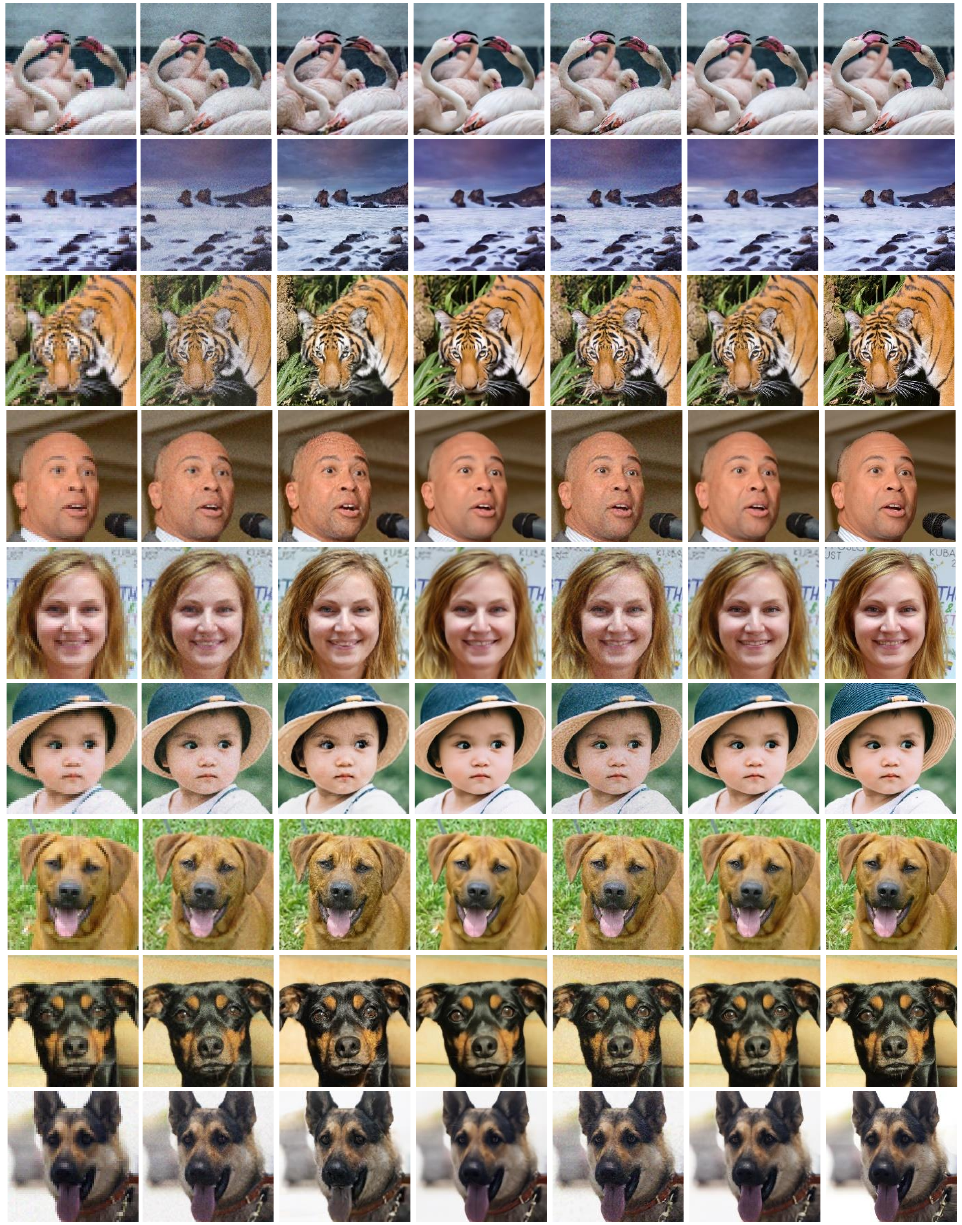}
        };
        \begin{scope}[x={(image.south east)}, y={(image.north west)}]
            \node at (0.075, 1.01) {\small Input};
            \node at (0.21, 1.01)  {\small Resample};
            \node at (0.355, 1.01) {\small FlowChef};
            \node at (0.495, 1.01) {\small FlowDPS};
            \node at (0.64, 1.01)  {\small FLAIR};
            \node at (0.78, 1.01)  {\small \textbf{Ours}};
            \node at (0.92, 1.01)  {\small Reference};
        \end{scope}
    \end{tikzpicture}
    \caption{Additional Qualitative results to illustrate the effectiveness of our proposed method on $\times 12$ super-resolution task from average pooling with $\sigma_y=0.03$.} 
    \label{fig:more_mc_5} 
\end{figure*}

\end{document}